\documentclass{article}

\usepackage{microtype}
\usepackage{graphicx, float}
\usepackage{subfigure}
\usepackage{booktabs} 
\usepackage{xcolor}

\newcommand{\blue}[1]{\textcolor{blue}{#1}}

\usepackage{hyperref}

\usepackage[accepted]{icml2025}

\usepackage{amsmath}
\usepackage{amssymb}
\usepackage{mathtools}
\usepackage{amsthm}
\usepackage{amsmath}

\DeclareMathOperator*{\argmin}{arg\,min}

\usepackage[capitalize,noabbrev]{cleveref}

\theoremstyle{plain}
\newtheorem{theorem}{Theorem}[section]

\newtheorem{lemma}[theorem]{Lemma}
\newtheorem{corollary}[theorem]{Corollary}
\theoremstyle{definition}
\newtheorem{definition}[theorem]{Definition}
\newtheorem{assumption}[theorem]{Assumption}
\theoremstyle{remark}
\newtheorem{remark}[theorem]{Remark}
\usepackage[textsize=tiny]{todonotes}

\icmltitlerunning{Online Algorithm for Personalised LLM Routing}
\begin{document}

\twocolumn[
\icmltitle{Chance-Constrained Personalized LLM Routing with Online Preference Learning}\icmlsetsymbol{equal}{*}

\begin{icmlauthorlist}
\icmlauthor{Ameya Anjarlekar}{yyy}
\icmlauthor{Rasoul Etesami}{yyy,comp}
\icmlauthor{R Srikant}{yyy}
\end{icmlauthorlist}

\icmlaffiliation{yyy}{Electrical and Compute Engineering, UIUC}
\icmlaffiliation{comp}{Department of Industrial and Enterprise Systems Engineering, UIUC}

\icmlcorrespondingauthor{Ameya Anjarlekar}{ameyasa2@illinois.edu}
\icmlcorrespondingauthor{Rasoul Etesami}{etesami1@illinois.edu}
\icmlcorrespondingauthor{R Srikant}{rsrikant@illinois.edu}

\icmlkeywords{LLM Routing, Online Learning}

\vskip 0.3in
]

\begin{abstract}
We study personalized LLM routing, where a router selects among
heterogeneous models with different costs and capabilities. The router
must minimize cumulative inference cost while ensuring reliable
realized response quality, even though user preferences are unknown and
must be learned from feedback during deployment. We formulate this
problem as a constrained contextual bandit and develop a primal-dual
GLM-UCB algorithm that learns preferences while balancing cost and
quality. Under a logistic preference model and a strict-feasibility
condition, we establish sublinear expected cost regret relative to an
optimal stationary randomized contextual oracle and a high-probability
guarantee on realized cumulative satisfaction. The framework extends
beyond LLM routing to broader online model-selection problems with
reliability requirements.
\end{abstract}

\section{Introduction}
\label{sec:introduction}

Modern machine learning systems increasingly rely on collections of
models with different capabilities and inference costs. Rather than
using the same model for every input, a system can route each request
to an appropriate model, using inexpensive models when they are
sufficient and reserving more expensive models for difficult inputs.
This model-routing paradigm is particularly relevant for large
language models (LLMs), whose costs and capabilities can differ
substantially across models
\cite{chen2024frugalgpt,hu2024routerbench,ong2025routellm}.

The challenge becomes more subtle when the appropriate model depends
not only on the request but also on the user. Different users may
prefer different models for the same request, and these preferences
may not be known before deployment. A router must therefore learn a
user's preferences from feedback while making routing decisions,
gradually discovering which model is more likely to satisfy that user
for a given request. This connection between personalization and
online learning has long been studied through contextual-bandit
methods for personalized recommendation
\cite{li2010contextual,xu2020timevarying}, and recent work has begun to
explore personalized and online LLM routing
\cite{dai2025personalizedrouter,xie2025gmtrouter,zeng2026metarouter}.
These developments motivate an online routing problem in which user
preferences are unknown and must be learned from the outcomes of the
models selected by the router.

Learning user preferences, however, is only part of the challenge. In
many deployments, consistently poor responses may be unacceptable even
if average satisfaction is high. A router may instead need to ensure
that at least a $(1-\epsilon)$ fraction of responses satisfy a desired
quality criterion over a horizon of $T$ requests, with probability at
least $1-\delta$. If $Y_t\in\{0,1\}$ indicates whether the response at
round $t$ satisfies the user's requirements, this requirement is
\begin{equation}
\mathbb{P}\left(
\sum_{t=1}^{T}Y_t \geq (1-\epsilon)T
\right) \ \geq \ 1-\delta
\label{eq:quality_constraint_intro}
\end{equation}
Here, the satisfaction outcome depends on both the selected model and the user's preferences. Therefore, the router must learn the preferences governing satisfaction likelihood while ensuring that the resulting sequence of outcomes remains sufficiently reliable by satisfying the chance constraint.

This leads to a challenging online decision problem. The router must
learn from the limited feedback generated by its own model choices,
keep cumulative inference cost low, and ensure that the realized
number of satisfactory responses does not fall too far below the
target. This differs from an expectation-level constraint, which can
balance expected cost and expected satisfaction through a Lagrangian
objective. Here, controlling expected satisfaction alone does not
directly control the probability of falling below the desired level in
the realized sequence. The router must therefore account for both
uncertainty in what it has learned about the user's preferences and
randomness in the resulting satisfaction outcomes.

We formulate this setting as a constrained contextual bandit with
known model costs and unknown preference parameters. At each round,
the learner observes the request context, selects one model, incurs its
cost, and observes only the satisfaction feedback from that model. The
goal is to minimize cumulative cost while satisfying the
high-probability requirement in
\eqref{eq:quality_constraint_intro}.

Our solution follows a simple principle: learn what the user prefers
while continuously monitoring whether the routing decisions are
keeping pace with the desired satisfaction level. The router maintains
optimistic estimates of how likely each model is to satisfy the user,
allowing it to explore uncertain choices while favoring cost-effective
ones. At the same time, it keeps track of any accumulated deficit in
satisfaction. When the router begins to fall behind, it shifts toward
models that are more likely to satisfy the user. We use this idea to propose an 
algorithm and provide sublinear regret bounds with respect to a stationary routing policy while ensuring that the chance constraints are met.

Although personalized LLM routing motivates our study, the framework applies more broadly. It applies whenever a system repeatedly chooses among heterogeneous models, services, or alternatives with different costs, while learning an unknown preference or satisfaction model from feedback and seeking to maintain a reliability requirement. This includes, for example, personalized recommendation and online advertising under resource or budget constraints \cite{budget}, cloud or edge service selection subject to cost, quality-of-service, and reliability requirements \cite{Ma2026EfficientSS}, and dynamic procurement among suppliers with heterogeneous costs and uncertain quality under operational constraints \cite{vassos2023}.

These considerations lead to the following contributions:
\begin{itemize}
\item We formulate personalized online LLM routing as a contextual bandit
with unknown user preferences and a cumulative requirement on
realized satisfaction, going beyond the usual expectation-level
cost-quality formulation.

\item We develop a primal-dual GLM-UCB algorithm that combines
optimistic preference learning with a virtual satisfaction-deficit
queue and a safety buffer.

\item Under a logistic preference model and a uniform strict-feasibility
condition, we establish sublinear expected cost regret relative to an
optimal stationary randomized contextual oracle while guaranteeing
the required realized cumulative satisfaction with high probability.

\item Finally, we provide numerical experiments to demonstrate the effectiveness of our algorithm. We evaluate its performance through synthetic experiments and a real-world LLM routing case study, examining chance constraint satisfaction, routing cost, personalization, and finite-horizon behavior.
\end{itemize}

\begin{table}[t]
\centering
\caption{Comparison of LLM routing approaches.}
\label{tab:routing_comparison}
\setlength{\tabcolsep}{4pt}
\renewcommand{\arraystretch}{1.05}
\begin{tabular}{lccc}
\toprule
Method & Online & Personalized & Safety \\
\midrule
FrugalGPT & $\times$ & $\times$ & $\times$ \\
RouteLLM & $\times$ & $\times$ & $\times$ \\
PersonalizedRouter & $\times$ & $\checkmark$ & $\times$ \\
MetaRouter & $\checkmark$ & $\checkmark$ & $\times$ \\
BaRP & $\checkmark$ & $\checkmark$ & $\times$ \\
Conformal Routing & $\times$ & $\times$ & $\checkmark$ \\
SLARouter & $\checkmark$ & $\times$ & $\checkmark$ \\
\textbf{Ours} & $\checkmark$ & $\checkmark$ & $\checkmark$ \\
\bottomrule
\end{tabular}

\begin{minipage}{0.96\columnwidth}
\footnotesize
\textit{References: }
1) FrugalGPT~\cite{chen2024frugalgpt};
2) RouteLLM~\cite{ong2025routellm};
3) PersonalizedRouter~\cite{dai2025personalizedrouter};
4) MetaRouter~\cite{zeng2026metarouter};
5) BaRP~\cite{wei2025barp};
6) Conformal Routing~\cite{uddin-bauer-2026-conformal};
7) SLARouter~\cite{woisetschlaeger2026costoptimalllmrouting}.
\end{minipage}
\end{table}

\section{Related Work}
\label{sec:related}

\paragraph{LLM routing and personalization.}
LLM routing reduces inference costs by selecting among models with different capabilities and prices. Early approaches, such as FrugalGPT~\cite{chen2024frugalgpt}, use cascades to balance response quality and cost, while subsequent methods learn routing policies from offline model evaluations, preferences, or task information. Examples include RouteLLM~\cite{ong2025routellm}, RouterDC~\cite{chen2024routerdc}, and CARROT~\cite{somerstep2025carrot}. Benchmarks such as RouterBench~\cite{hu2024routerbench} support systematic evaluation of these approaches. However, conventional routing methods primarily learn cost-effective model selection from available data; they do not jointly learn user preferences online and enforce a chance constraint on realized outcomes.

Personalized routing addresses the additional challenge that different users may prefer different models. PersonalizedRouter~\cite{dai2025personalizedrouter} models heterogeneous user preferences, while GMTRouter~\cite{xie2025gmtrouter} and MetaRouter~\cite{zeng2026metarouter} explore learning-based approaches to personalized routing. BaRP~\cite{wei2025barp} further studies routing under bandit feedback. These methods move closer to our setting, where user preferences are learned through interaction rather than assumed to be known in advance. However, our work differs from these approaches by combining online personalization with an explicit high-probability guarantee on realized cumulative satisfaction.

\paragraph{Online learning with constraints.}
Primal--dual methods provide a general framework for online learning under cumulative constraints. Prior work studies long-term constraints in online convex optimization~\cite{mahdavi2012trading}, constrained Markov decision processes~\cite{cardoso2019large}, and contextual bandits with knapsacks~\cite{agrawal2016linear,badanidiyuru2018bandits,balseiro2020dual}. These methods typically regulate resource consumption or optimize expected performance under cumulative constraints. In contrast, our method uses a virtual queue to track the \emph{realized satisfaction deficit} and learns unknown satisfaction probabilities from contextual bandit feedback. This enables a high-probability guarantee on realized cumulative satisfaction, rather than merely controlling expected performance or constraint violation.

Table~\ref{tab:routing_comparison} summarizes how representative routing approaches compare along three dimensions: online learning, personalization, and explicit satisfaction or safety guarantees. Among the approaches considered, our method is the only one that combines all three.

\section{Problem Formulation}
\label{sec:problem_statement}

We study personalized LLM routing in which a router selects one of
$K$ candidate models for each incoming prompt. The models may differ in
computational cost, and a user's preferred model may depend on the
prompt. Since these preferences are unknown, the router must learn them
from user feedback while keeping inference cost low and maintaining a
desired level of satisfaction.

We formulate this problem as a contextual bandit with generalized
linear feedback and a cumulative chance constraint (also referred to as Quality constraint). The router observes
the prompt and features of all candidate models, but receives
satisfaction feedback only from the selected model.

\subsection{Online Routing Model}

Consider a routing system operating over a horizon of $T$ rounds,
indexed by $t\in[T]:=\{1,\ldots,T\}$. At each round:

\begin{enumerate}
\item \textbf{Context.}
The router observes a prompt $p_t\in\mathcal P$, drawn
independently from the prompt distribution.

\item \textbf{Model selection.}
After observing $p_t$, the router selects
\[
    a_t\in\mathcal A:=\{1,\ldots,K\}.
\]

\item \textbf{Features.}
A known feature map $\phi$ assigns a feature vector to each
prompt-model pair:
\begin{equation}
    x_{t,a}:=\phi(p_t,a)\in\mathbb R^d,
    \qquad a\in\mathcal A.
\end{equation}
Thus, the router can compute the features of all candidate models
before making its selection.

\item \textbf{Cost and feedback.}
Selecting model $a_t$ incurs a known deterministic cost
\begin{equation}
    c(a_t)\in[0,1].
\end{equation}
After the response is generated, the router observes binary
satisfaction feedback $Y_t\in\{0,1\}$, where $Y_t=1$ indicates that
the response satisfies the user's requirements.

\end{enumerate}

The router therefore observes the features of every candidate model but
only observes satisfaction feedback for the selected model.

\subsection{User Satisfaction Model}
\label{sec:model}

We model the probability that a response satisfies the user through a
generalized linear model with a logistic link:

Let $\mathcal G_t$ denote the information available before observing $Y_t$, including the current context and selected action. We assume
\begin{equation}
\begin{gathered}
Y_t\mid\mathcal G_t\sim\operatorname{Bernoulli}(p_{a_t,t}^*),\\
p_{a,t}^*:=\mu(x_{t,a}^{\top}\theta^*),\quad \mu(z):=(1+e^{-z})^{-1}.
\end{gathered}
\end{equation}
Here, $\theta^*\in\mathbb R^d$ is unknown and captures how the
observed prompt-model features relate to the user's satisfaction.

\begin{assumption}[Boundedness]
\label{ass:boundedness}
For every $t\in[T]$ and $a\in\mathcal A$,
\begin{equation}
\|x_{t,a}\|_2\le L,
\qquad
\|\theta^*\|_2\le S,
\end{equation}
where $L,S>0$ are known. Moreover, the model costs are normalized so that they satisfy
$c(a)\in[0,1]$ for all $a\in\mathcal A$.
\end{assumption}

To ensure that the cumulative satisfaction requirement remains feasible, we assume the availability of a reliable fallback model with known satisfaction performance.

\begin{assumption}[Fallback]
\label{ass:slater}
There exists a baseline model $\tilde a\in\mathcal A$ that is exempt from the generalized linear model and has a known satisfaction probability $p^*_{\tilde a,t}=\nu,\ \forall t\in[T]$. We additionally assume $\nu>1-\epsilon$, so the fallback is strictly feasible; its feedback is not used to estimate $\theta^*$.
\end{assumption}

 This assumption does not imply that every LLM routing problem necessarily admits such an action. In practice, the fallback may correspond to a more expensive model that provides sufficiently reliable responses to user queries or a validated service with a known satisfaction guarantee exceeding the target.

We use the following slack factor to quantify this margin.
\begin{definition}[Slack Factor]
For $\epsilon>0$ and fallback satisfaction probability $\nu$, define the slack factor $\xi$ by
    \begin{equation}
        \xi := \frac{1-\nu}{\epsilon}.
    \end{equation}
    The strict-feasibility assumption $\nu\!>\!1-\epsilon$ ensures $0\!\leq\!\xi<1$.
\end{definition}

\subsection{Chance-Constrained Routing Objective}

The router seeks to minimize cumulative inference cost while
maintaining a prescribed level of cumulative user satisfaction. Rather
than constraining only the expected satisfaction, we require the
realized satisfaction over the entire horizon to meet the target with
high probability.

\begin{definition}[Quality Constraint]
\label{def:chance_constraint}
For a target level $1-\epsilon \in (1/2,1)$, failure probability $\delta \in (0,1/2)$, and a known time horizon $T$, the
routing policy must satisfy
\begin{equation}
\mathbb P\left(
\frac{1}{T}\sum_{t=1}^T Y_t
\geq 1-\epsilon
\right)
\geq 1-\delta.
\label{eq:chance_constraint}
\end{equation}
\end{definition}

Equivalently, at least $(1-\epsilon)T$ responses must satisfy the user
with probability at least $1-\delta$. This chance constraint directly captures
the reliability requirement of the routing system.

\subsection{Stationary Contextual Oracle}

To measure the cost of online learning, we compare the router with a
stationary randomized contextual policy that knows the true
satisfaction model.

\begin{definition}[Stationary Contextual Oracle]
\label{def:oracle}
Let $\Pi$ denote the class of stationary randomized contextual policies

$$
\pi:\mathcal P\rightarrow\Delta(\mathcal A),
$$

where $\pi(a\mid p)$ is the probability of selecting model $a$ for
prompt $p$. The oracle is any solution to
\begin{equation}
\pi^*
\in
\arg\min_{\pi\in\Pi}
\mathbb E\left[
\sum_{t=1}^T c(A_t)
\right]
\end{equation}
subject to
\begin{equation}
\mathbb P\left(
\frac{1}{T}\sum_{t=1}^T Y_t(A_t)
\geq 1-\epsilon
\right)
\geq 1-\delta,
\end{equation}
where $A_t\sim\pi^*(\cdot\mid p_t),$ and the probability is taken over the prompts, model selections, and
satisfaction outcomes.
\end{definition}

The oracle represents the minimum expected cost that could be achieved by a stationary
router with complete knowledge of the user's satisfaction model.

\begin{lemma}[Necessary Expectation Condition]
\label{lem:necessary_expectation}
Under the stationary contextual oracle, for
$\epsilon\in(0.01,0.5)$ and $\delta\in(0,0.49)$, there exists a maximum margin
$\rho^*_{\max}(T,\epsilon,\delta) = \mathcal{O}(1/\sqrt{T})$ such that
$$
\mathbb{P}\left(
\frac{1}{T}\sum_{t=1}^T Y_t\geq 1-\epsilon
\right)\geq 1-\delta
\ \Longrightarrow \
\mathbb{E}[p^*_{a_t^*,t}]
\geq 1-\epsilon+\rho^*
$$
for every
$0\leq\rho^*\leq\rho^*_{\max}(T,\epsilon,\delta)$.
\end{lemma}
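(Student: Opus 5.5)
The plan is to reduce the chance constraint to a statement about a sum of i.i.d.\ Bernoulli variables, and then use a normal approximation with explicit error to show that the mean must exceed $1-\epsilon$ by order $1/\sqrt{T}$.

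\textbf{Step 1: i.i.d.\ structure.} Under a stationary randomized contextual policy $\pi^*$, the prompts $p_t$ are i.i.d., the actions satisfy $A_t\sim\pi^*(\cdot\mid p_t)$, and each $Y_t$ is conditionally Bernoulli given $\mathcal G_t$ (including the fallback $\tilde a$, with parameter $\nu$). Hence the pairs $(p_t,A_t,Y_t)$ are i.i.d. across rounds. Marginally, $Y_t\sim\operatorname{Bernoulli}(q)$ with
\[
q:=\mathbb E[p^*_{a_t^*,t}],
\]
which does not depend on $t$. Therefore $S_T:=\sum_t Y_t\sim\operatorname{Bin}(T,q)$, and the constraint in Definition~\ref{def:oracle} becomes $\mathbb P(S_T\ge(1-\epsilon)T)\ge 1-\delta$. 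The goal is to show that this forces $q\ge 1-\epsilon+\rho^*$ for every $0\le\rho^*\le\rho^*_{\max}$, where $\rho^*_{\max}$ is defined below.

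\textbf{Step 2: two cases.} Let $\sigma^2:=q(1-q)$.
\begin{itemize}
\item If $q\ge 1-\epsilon+c_0$ for a fixed constant $c_0>0$ (for instance, $q\ge 0.995$ with $c_0=0.005$, which is possible because $\epsilon>0.01$), the conclusion holds as soon as $\rho^*_{\max}\le c_0$. This is true for $T$ large enough; for small $T$, $\rho^*_{\max}$ is capped at $c_0$.
\item Otherwise $q\in(1/2-\text{small},\,0.995)$. This uses $1-\epsilon\in(0.5,0.99)$ together with the fact that $q\le 1/2$ would make the event have probability below $1/2<1-\delta$. In this range $\sigma^2\ge\sigma_{\min}^2>0$ for an absolute constant $\sigma_{\min}$.
\end{itemize}
In the second case, apply the Berry--Esseen theorem to the standardized sum:
\[
\mathbb P(S_T\ge(1-\epsilon)T)\le 1-\Phi\!\Big(\tfrac{(1-\epsilon-q)\sqrt T}{\sigma}\Big)+\tfrac{C_{BE}}{\sigma_{\min}\sqrt T}.
\]
Requiring the left-hand side to be at least $1-\delta$ gives
\[
q-(1-\epsilon)\ \ge\ \frac{\sigma}{\sqrt T}\,\Phi^{-1}\!\Big(1-\delta-\tfrac{C_{BE}}{\sigma_{\min}\sqrt T}\Big).
\]

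\textbf{Step 3: define $\rho^*_{\max}$.} Set
\[
\rho^*_{\max}:=\min\Big\{c_0,\ \Big[\tfrac{\sigma_{\min}}{\sqrt T}\,\Phi^{-1}\!\big(1-\delta-\tfrac{C_{BE}}{\sigma_{\min}\sqrt T}\big)\Big]_+\Big\}.
\]
This quantity is nonnegative and at most $\Phi^{-1}(1-\delta)\,\sigma_{\min}/\sqrt T$, so it is $\mathcal O(1/\sqrt T)$. Since $\delta<0.49$, the argument of $\Phi^{-1}$ exceeds $1/2$ once $T\gtrsim (C_{BE}/(0.01\,\sigma_{\min}))^2$, so the margin is strictly positive for large $T$. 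Monotonicity then gives the implication for every $\rho^*$ in $[0,\rho^*_{\max}]$. In the case $q\ge 1-\epsilon$, the factor $\sigma$ in the Step~2 bound can be replaced by $\sigma_{\min}$ whenever the right-hand side is positive, because the bound only needs a lower estimate on the margin. If $q<1-\epsilon$, Berry--Esseen directly gives probability at most $1/2+\mathcal O(1/\sqrt T)<1-\delta$ for large $T$, a contradiction; for small $T$, $\rho^*_{\max}=0$ and the claim $q\ge1-\epsilon$ must still be checked. That last check is the one point to handle carefully.

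\textbf{Main obstacle.} The delicate point is the lattice effect: for small $T$ a binomial with $q$ slightly below $1-\epsilon$ might still clear $(1-\epsilon)T$ with probability above $1/2$, because of discreteness. I would handle this by accepting the trivial case $\rho^*_{\max}=0$ only when $q\ge1-\epsilon$ is guaranteed, and otherwise state the result for $T\ge T_0(\epsilon,\delta)$. The fallback would be the stronger claim via the binomial median property: the median of $\operatorname{Bin}(T,q)$ lies in $\{\lfloor Tq\rfloor,\lceil Tq\rceil\}$, so $\mathbb P(S_T\ge m)>1/2$ requires $m\le\lceil Tq\rceil$. This shows $q\ge 1-\epsilon-1/T$, which suffices up to the $\mathcal O(1/\sqrt T)$ slack absorbed in the statement.
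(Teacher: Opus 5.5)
\paragraph{Comparison with the paper.}
Your route differs from the paper's. The paper never approximates. It defines $\mu_{T,\delta}$ as the exact solution of $\mathbb P_{\mu}(S_T\ge \lceil T(1-\epsilon)\rceil)=1-\delta$ and sets $\rho^*_{\max}:=\mu_{T,\delta}-(1-\epsilon)$. The implication then follows in one line from the continuity and strict monotonicity of the binomial upper tail in $\mu$. The $\mathcal O(1/\sqrt T)$ rate is obtained separately, and only asymptotically, from the CLT in Corollary~\ref{cor:rho_star_asymptotic}. That margin is sharp: it really is the \emph{maximum}, attained by $q=\mu_{T,\delta}$. It also holds for all $\epsilon,\delta\in(0,1)$ and needs no case split.

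Your Berry--Esseen argument instead yields an explicit but non-sharp margin, with $\sigma_{\min}\approx 0.07$ in place of $\sqrt{\epsilon(1-\epsilon)}$. For your margin the $\mathcal O(1/\sqrt T)$ upper bound holds by construction. The real content is a non-asymptotic lower bound $q-(1-\epsilon)\ge c/\sqrt T$ once $T\ge T_0$, which the paper gets only as a limit. Your argument also actually uses the restrictions $\epsilon>0.01$ and $\delta<0.49$, which the paper's argument never needs. In the regime $1-\delta-C_{BE}/(\sigma_{\min}\sqrt T)>1/2$, your steps (the reduction to $q>1/2$, the Berry--Esseen tail bound, and replacing $\sigma$ by $\sigma_{\min}$) are correct.

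\paragraph{The small-$T$ regime.}
Here the issue is not merely delicate: with the positive part $[\cdot]_+$ your definition sets $\rho^*_{\max}=0$, and the resulting claim $q\ge 1-\epsilon$ is false. Take $T=1$, $\epsilon=0.4$, $\delta=0.45$, all inside the allowed ranges. The chance constraint reads $q\ge 0.55$, so every $q\in[0.55,0.6)$ is feasible but lies below $1-\epsilon=0.6$. A cost-minimizing oracle with cheaper, less satisfying models would sit exactly at $q=0.55$.

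Your median fallback does not repair this. It only gives $q>1-\epsilon-1/T$, and the statement has no slack on that side: whenever $\rho^*_{\max}\ge 0$, the value $\rho^*=0$ is admissible and demands $q\ge 1-\epsilon$ exactly.

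The fix is to drop the positive part. Let $\rho^*_{\max}$ be negative (say $-1$) whenever the Berry--Esseen quantity is not positive, so the conclusion is vacuous for such $T$. This is what the paper's exact definition does automatically: in the example above it gives $\rho^*_{\max}=0.55-0.6<0$. The change does not affect the $\mathcal O(1/\sqrt T)$ claim, which concerns only large $T$. Alternatively, state the lemma explicitly for $T\ge T_0(\epsilon,\delta)$.
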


where $\Phi$ denotes the cumulative distribution function (CDF) of the standard normal distribution, and $\Phi^{-1}$ denotes its inverse (quantile) function.
\begin{remark}
    This lemma shows that, to meet the satisfaction requirement with high probability, the oracle's expected satisfaction must be slightly above the target $1-\epsilon$. Further, the corollary shows that this required gap decreases as $O(1/\sqrt{T})$ and becomes smaller as the horizon $T$ grows.
\end{remark}

\subsection{Cost Regret}

We measure the additional cost incurred while learning relative to the
oracle.

\begin{definition}[Cumulative Cost Regret]
\label{def:regret}
Let $a_t^*$ denote the model selected by the oracle at round $t$. The
expected cumulative cost regret is
\begin{equation}
\operatorname{Regret}_T
=\mathbb E\left[
\sum_{t=1}^T
\bigl(c(a_t)-c(a_t^*)\bigr)
\right].
\end{equation}
\end{definition}

Our goal is therefore to achieve sublinear cost regret while satisfying
the chance constraint in
\eqref{eq:chance_constraint}.

\begin{algorithm}[H]
\caption{Primal-Dual GLM-UCB Router}
\label{alg:primal_dual_glm_ucb_final}
\begin{algorithmic}[1]

\REQUIRE
Time horizon $T$;
action set $\mathcal A$;
parameters $\epsilon,\delta$;
regularization parameter $\beta\geq L^2$; stepsize $\eta$;
safety buffer $\rho$.
\STATE Initialize $
V_1\leftarrow\beta I_d,\ \lambda_1\leftarrow0.
$

\FOR{$t=1,\ldots,T$}

\STATE Observe prompt $p_t$ and compute
$\{x_{a,t}\}_{a\in\mathcal A}$.

\STATE Compute $\widehat\theta_t$ according to
\eqref{eq:exact_mle}.

\FORALL{$a\in\mathcal A$}
    \STATE Compute $\operatorname{UCB}_t(a)$ according to
    \eqref{eq:exact_ucb} with confidence radius ${\alpha_t}$ given by \eqref{eq:alpha_t}. 
\ENDFOR

\STATE Select
\[
a_t
\leftarrow
\arg\min_{a\in\mathcal A}
\left[
    c(a)-\lambda_t\operatorname{UCB}_t(a)
\right].
\]

\STATE Route $p_t$ using model $a_t$ and observe $Y_t$.

\STATE \textbf{if} {$a_t \neq \tilde{a}$}
\[
V_{t+1}
\leftarrow
V_t+x_{a_t,t}x_{a_t,t}^{\top}.
\]
\STATE \textbf{else}
\STATE $V_{t+1}\leftarrow V_t$.
\STATE \textbf{end if}
\STATE Update
\[
\lambda_{t+1}
\leftarrow
\max\big\{
0,\,
\lambda_t+
\eta(1-\epsilon+\rho-Y_t)
\big\}.
\]

\ENDFOR

\end{algorithmic}
\end{algorithm}

\section{Primal-Dual GLM-UCB Router}
\label{sec:algorithm}

The router must address several challenges simultaneously. It must learn which models are likely to satisfy the user while preventing cumulative satisfaction from falling below its target and keeping the cost as low as possible. 

We now propose our main algorithm that achieves sublinear regret while satisfying the chance constraint~\eqref{eq:quality_constraint_intro}. The algorithm proceeds in the following steps:
\begin{enumerate} 
\item At each round, the router first uses the observed feedback from previous rounds to estimate the unknown GLM parameter $\theta^*$. 
\item Next, it selects a model by balancing its inference cost against its estimated satisfaction, with the tradeoff determined by the current dual variable.
\item After observing the realized satisfaction, the router updates the dual variable according to the resulting satisfaction deficit. 
\end{enumerate}
The overall algorithm is summarized in Algorithm~\ref{alg:primal_dual_glm_ucb_final}.

\subsection{Learning User Satisfaction}

The router estimates $\theta^*$ using all feedback observed so far. At round $t$, it computes the constrained regularized
logistic estimator
\begin{equation}
\widehat\theta_t
=
\argmin_{\theta\in\mathcal B_2(S)}
\sum_{\substack{\tau=1\\a_\tau \neq \tilde{a}}}^{t-1}
l\left(
Y_\tau,
\mu(x_{a_\tau,\tau}^{\top}\theta)
\right)
+
\frac{\beta}{2}\|\theta\|_2^2
,
\label{eq:exact_mle}
\end{equation}
where $l(y,p)
:=-y\log p-(1-y)\log(1-p)$ and
\begin{equation}
\mathcal B_2(S)
=\{\theta\in\mathbb R^d:\|\theta\|_2\le S \}.
\end{equation}

After observing the selected model's feedback, the design matrix is
updated if $a_t \neq \tilde{a}$ according to
\begin{equation}
V_{t+1}
=V_t+x_{a_t,t}x_{a_t,t}^{\top}.
\end{equation}

The parameter constraint ensures that both the true and estimated
parameters lie in the same bounded domain. Since the logistic link has
positive derivative on any bounded interval, this gives the uniform
curvature needed for the confidence analysis.

\subsection{Optimistic Satisfaction Estimates}

For every candidate model, we form
\begin{equation}
\operatorname{UCB}_t(a)
=\min\{
1,
\mu(x_{a,t}^{\top}\widehat\theta_t)
+
\alpha_t\|x_{a,t}\|_{V_t^{-1}}
\}.
\label{eq:exact_ucb}
\end{equation}
Here, $\alpha_t$ is the confidence radius defined by
\begin{equation}
\label{eq:alpha_t}
\alpha_t
=\frac{1}{\kappa}
\left[
\frac{1}{2}
\sqrt{
d\log\left(
1+\frac{tL^2}{d\beta}
\right)
+
2\log\left(\frac{2}{\delta_{\mathcal E}}\right)
}
+
\sqrt{\beta}S
\right],
\end{equation}
where $\kappa$ is a constant defined by,\footnote{We note that under Assumption~\ref{ass:boundedness}, all relevant linear predictors
satisfy $|x_{t,a}^{\top}\theta|\leq LS$.}
\begin{align}
\kappa
&:=
\inf_{|z|\leq LS}\mu'(z)>0.
\label{eq:kappa} \\
\delta_{\mathcal{E}} &:=\frac{1}{T^\zeta} \ , \ \zeta \in (2.5,\infty).\label{eq: delta_E}
\end{align}
Moreover, for $a=\tilde{a}$, we set
\begin{equation}
\operatorname{UCB}_t(\tilde{a}) = \nu
\end{equation}
The confidence-based formula is used for $a\neq\tilde a$; the fallback is handled separately by its known value $\nu$. Thus, the router can explore potentially useful models
without treating an uncertain model as poor simply because it has not
yet received enough feedback.

\begin{figure*}[t]
    \centering
    \includegraphics[width=0.32\textwidth]{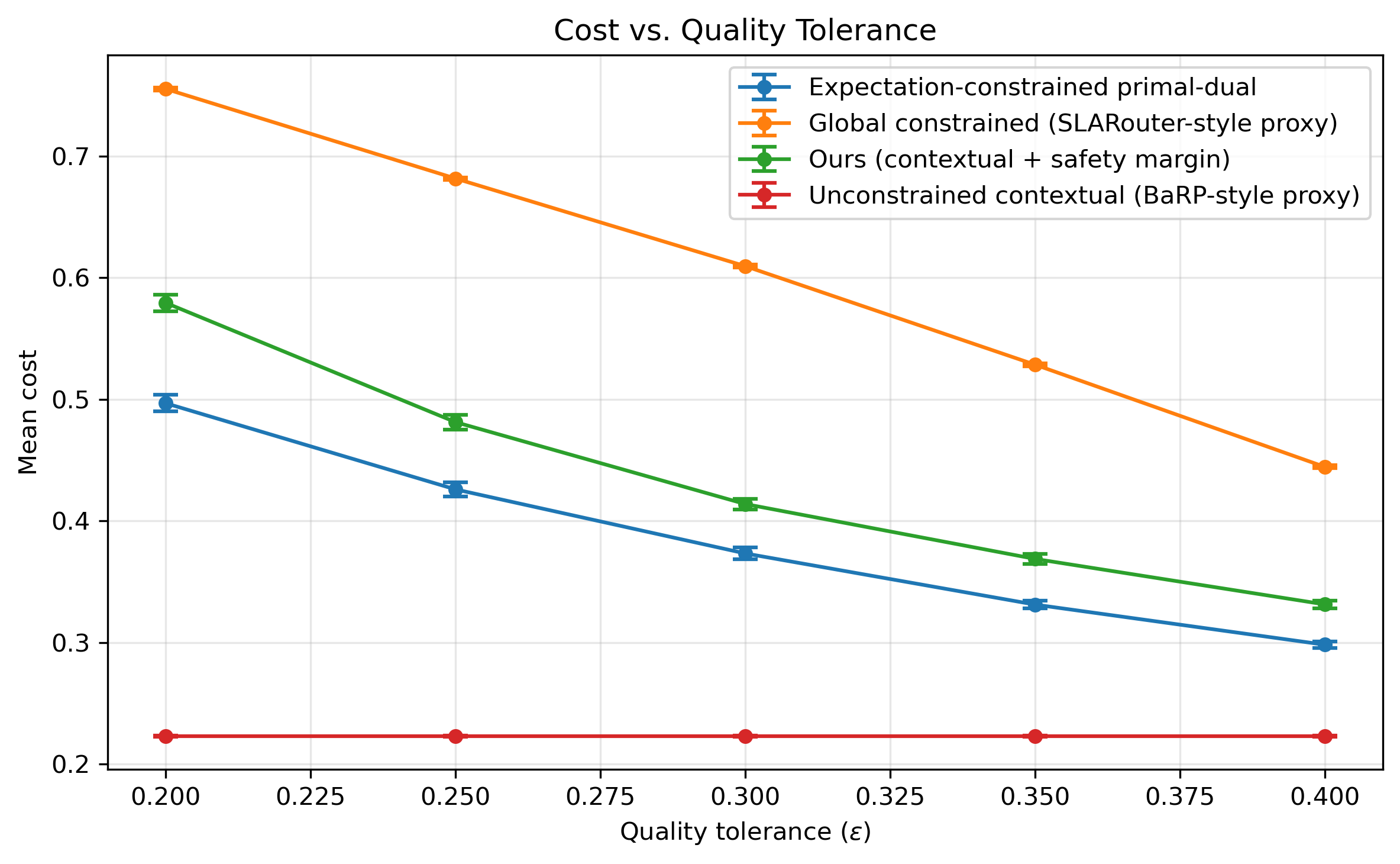}
    \hfill
    \includegraphics[width=0.32\textwidth]{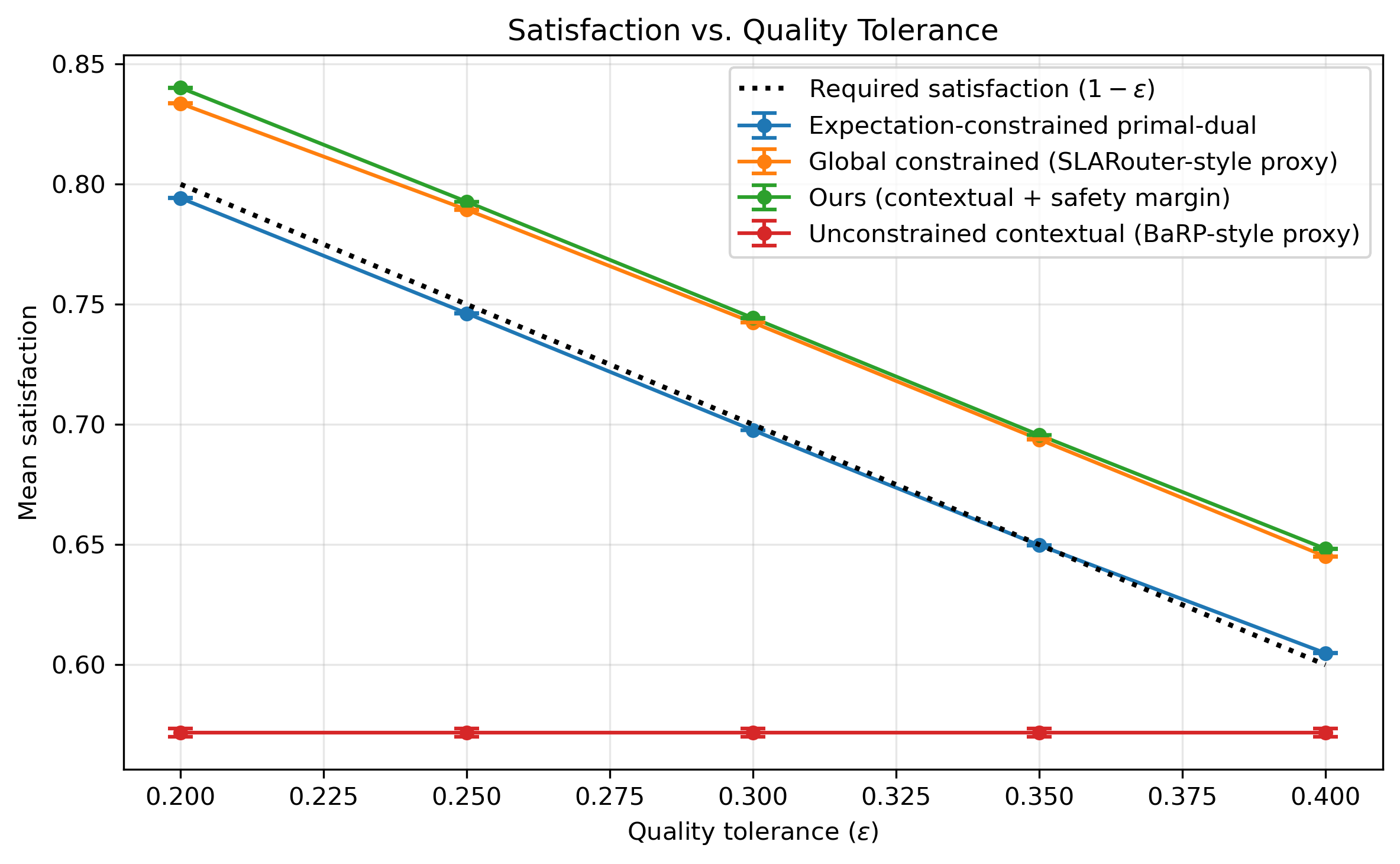}
    \hfill
    \includegraphics[width=0.32\textwidth]{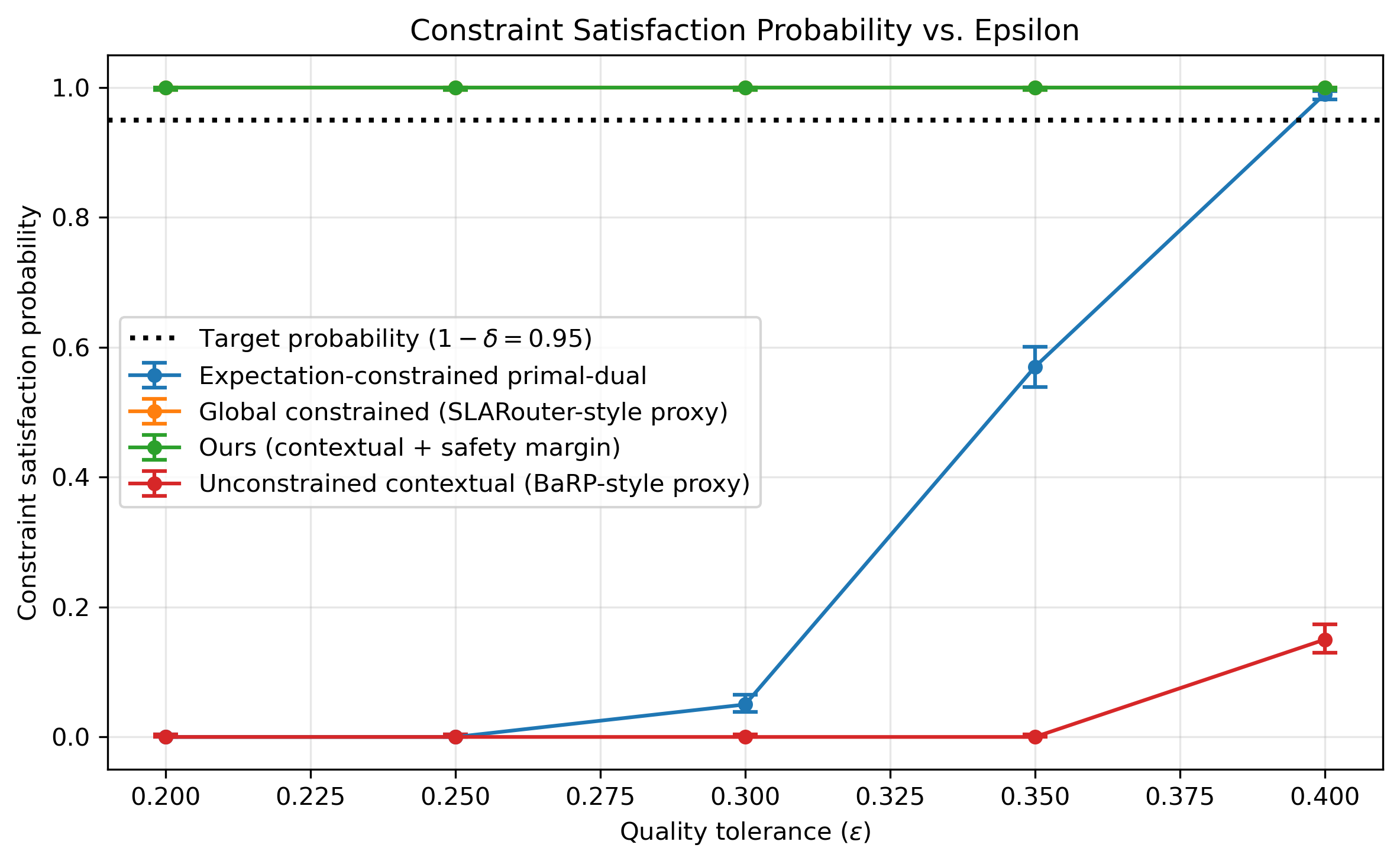}
    \caption{Comparison of the proposed routing algorithm with alternative routing strategies as the satisfaction tolerance $\epsilon$ varies. In the third panel, the green and orange curves overlap.}
    \label{fig:baseline_comparison}
\end{figure*}
\subsection{Model Selection and Virtual Satisfaction Queue}

To balance inference cost against the cumulative satisfaction requirement,
we maintain a nonnegative virtual satisfaction queue
$\lambda_t \geq 0$, initialized with $\lambda_1=0$. The variable
$\lambda_t$ can equivalently be viewed as a dual variable associated
with the satisfaction requirement. After selecting model $a_t$ and
observing the realized satisfaction $Y_t$, the queue is updated according to
\begin{equation}
\lambda_{t+1}
=
\max\big\{
0,\,
\lambda_t
+
\eta\left(1-\epsilon+\rho-Y_t\right)
\big\},
\label{eq:exact_dual_update}
\end{equation}
where $\eta>0$ is the step size and $\rho>0$ is a safety buffer.
Thus, the queue increases when the realized satisfaction falls below
the buffered target $1-\epsilon+\rho$ and decreases otherwise.

At round $t$, given the current queue $\lambda_t$, the router selects
the model according to
\begin{equation}
a_t
=
\arg\min_{a\in\mathcal A}
\left[
c(a)-\lambda_t\operatorname{UCB}_t(a)
\right].
\label{eq:exact_primal}
\end{equation}
The queue $\lambda_t$ therefore determines how strongly the router
prioritizes satisfaction relative to cost. When $\lambda_t$ is small,
the router places greater emphasis on cost, whereas a larger
$\lambda_t$ favors models with higher optimistic satisfaction
estimates. 

We choose
\begin{equation}
\eta=\frac{C_\eta}{\sqrt{T}},\qquad
\rho=\frac{C_\rho}{\sqrt{T}}
\label{eq:eta_rho}
\end{equation}
where
$C_\eta = \frac{1}{(1-\xi)\epsilon \sqrt{2 \log(4/\delta)}}$ and $C_\rho =5\sqrt{2 \log(4/\delta)}$. The safety buffer $\rho$
provides additional slack to account for the random deviation between
expected and realized cumulative satisfaction.


\section{Theoretical Guarantees}
\label{sec:theoretical_guarantees}

We now establish theoretical guarantees for the algorithm, including a sublinear regret bound. We first derive a high-probability confidence bound for the estimated satisfaction model, which is then used to control the $\operatorname{UCB}_t(a)$ terms. Building on these results, we analyze the evolution of the virtual queue and establish the main theoretical guarantees of the algorithm.

We begin with the following lemma, which provides a high-probability bound on the estimation error of $\theta^*$.

\begin{lemma}[Uniform GLM Confidence Bound]
\label{lem:confidence}
Under Assumption~\ref{ass:boundedness}, with probability at least
$1-\delta_{\mathcal E}$, simultaneously for all $t\in[T]$ and
$a\in\mathcal A\setminus\{\tilde a\}$,
\begin{equation}
\|\widehat\theta_t-\theta^*\|_{V_t}\leq\alpha_t
\label{eq:parameter_confidence_bound}
\end{equation}
and
\begin{equation}
\left|
\mu(x_{a,t}^{\top}\widehat\theta_t)
-
\mu(x_{a,t}^{\top}\theta^*)
\right|
\leq
\alpha_t\|x_{a,t}\|_{V_t^{-1}}.
\label{eq:link_error_bound}
\end{equation}
\end{lemma}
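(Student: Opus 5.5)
The argument will follow the standard GLM-UCB template (Filippi et al.; Li et al.), using a self-normalized martingale bound together with the uniform curvature constant $\kappa$ from \eqref{eq:kappa}. Write $\mathcal S_t=\{\tau<t: a_\tau\neq\tilde a\}$ for the rounds that actually enter the estimator, and let $x_\tau:=x_{a_\tau,\tau}$. The likelihood and the design matrix both use only these rounds, so $V_t=\beta I+\sum_{\tau\in\mathcal S_t}x_\tau x_\tau^\top$. Define the noise $\varepsilon_\tau:=Y_\tau-\mu(x_\tau^\top\theta^*)$. Conditional on $\mathcal G_\tau$, $\varepsilon_\tau$ is centered, bounded in $[-1,1]$, and hence $1/2$-sub-Gaussian. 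The fallback rounds are simply skipped; formally, $\varepsilon_\tau\mathbf 1\{a_\tau\neq\tilde a\}$ is still a martingale difference sequence adapted to $\mathcal G_{\tau+1}$, since $a_\tau$ is $\mathcal G_\tau$-measurable.

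\textbf{Step 1: self-normalized bound.} I will apply the Abbasi-Yadkori et al.\ self-normalized inequality to $\xi_t:=\sum_{\tau\in\mathcal S_t}\varepsilon_\tau x_\tau$ with sub-Gaussian parameter $R=1/2$. With probability at least $1-\delta_{\mathcal E}$, simultaneously for all $t$,
\[
\|\xi_t\|_{V_t^{-1}}\le \tfrac12\sqrt{2\log(1/\delta_{\mathcal E})+\log(\det V_t/\beta^d)}.
\]
The determinant–trace inequality with $\|x\|\le L$ gives $\log(\det V_t/\beta^d)\le d\log(1+tL^2/(d\beta))$. I will spend the factor $2$ inside $\log(2/\delta_{\mathcal E})$ as slack, or on a union bound if one is needed. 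This reproduces the square-root term in \eqref{eq:alpha_t}.

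\textbf{Step 2: from the score to the parameter.} Let $g_t(\theta):=\sum_{\tau\in\mathcal S_t}\mu(x_\tau^\top\theta)x_\tau+\beta\theta$ be the gradient map of the regularized loss, up to the label term. By the mean value theorem, $g_t(\theta_1)-g_t(\theta_2)=G(\theta_1-\theta_2)$ with $G=\sum_\tau \mu'(z_\tau)x_\tau x_\tau^\top+\beta I$, where each $z_\tau$ lies between $x_\tau^\top\theta_1$ and $x_\tau^\top\theta_2$. For $\theta_1,\theta_2\in\mathcal B_2(S)$ we have $|z_\tau|\le LS$, so $\mu'(z_\tau)\ge\kappa$. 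Since $\kappa\le 1/4<1$, this gives $G\succeq\kappa V_t$. Consequently
\[
\|\widehat\theta_t-\theta^*\|_{V_t}\le \kappa^{-1}\|g_t(\widehat\theta_t)-g_t(\theta^*)\|_{V_t^{-1}}.
\]

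\textbf{Step 3: the constraint set, which is the main obstacle.} Because $\widehat\theta_t$ is a constrained minimizer, its first-order condition is only a variational inequality, not $\nabla=0$. Writing $\nabla\mathcal L_t(\theta)=g_t(\theta)-\sum_\tau Y_\tau x_\tau$, we get $\langle \nabla\mathcal L_t(\widehat\theta_t),\theta^*-\widehat\theta_t\rangle\ge0$. I will avoid the projection step by working directly with strong convexity of $\mathcal L_t$ in the $\kappa V_t$ geometry on $\mathcal B_2(S)$. Optimality of $\widehat\theta_t$ together with $\kappa V_t$-strong convexity gives
\[
\kappa\|\widehat\theta_t-\theta^*\|_{V_t}^2\le\langle\nabla\mathcal L_t(\theta^*),\theta^*-\widehat\theta_t\rangle.
\]
Next, $\nabla\mathcal L_t(\theta^*)=-\xi_t+\beta\theta^*$. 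By Cauchy–Schwarz in the $V_t$-norm, $\|\nabla\mathcal L_t(\theta^*)\|_{V_t^{-1}}\le\|\xi_t\|_{V_t^{-1}}+\sqrt\beta S$, using $\|\beta\theta^*\|_{V_t^{-1}}\le\sqrt\beta\|\theta^*\|$. Dividing through gives $\|\widehat\theta_t-\theta^*\|_{V_t}\le\kappa^{-1}(\|\xi_t\|_{V_t^{-1}}+\sqrt\beta S)\le\alpha_t$ on the event of Step 1, which is \eqref{eq:parameter_confidence_bound}. Under this route Step 2 serves only as motivation for the curvature constant, and the strong-convexity inequality does the real work.

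\textbf{Step 4: the link error.} For $a\ne\tilde a$, the mean value theorem and $\mu'\le 1/4\le1$ give
\[
|\mu(x^\top\widehat\theta_t)-\mu(x^\top\theta^*)|\le|x^\top(\widehat\theta_t-\theta^*)|\le\|x\|_{V_t^{-1}}\|\widehat\theta_t-\theta^*\|_{V_t}\le\alpha_t\|x\|_{V_t^{-1}},
\]
which is \eqref{eq:link_error_bound}. This holds on the same event, so it holds uniformly over $t$ and $a$ with no further union bound.
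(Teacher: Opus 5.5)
Your proposal is correct and follows essentially the same route as the paper. The paper likewise combines the variational inequality for the constrained minimizer with the curvature bound $\bar H_t\succeq\kappa V_t$; it obtains this from the integrated Hessian along the segment in $\mathcal B_2(S)$, which is exactly your $\kappa V_t$-strong convexity. It then uses Cauchy--Schwarz, the bound $\|\beta\theta^*\|_{V_t^{-1}}\le\sqrt\beta S$, the self-normalized martingale inequality, and the mean value theorem for the link error, so your Step~2 is indeed dispensable. One small fix: the $1/2$-sub-Gaussian parameter comes from $\varepsilon_\tau$ lying in an interval of length one, namely $[-\mu(x_\tau^{\top}\theta^*),\,1-\mu(x_\tau^{\top}\theta^*)]$, not merely from $|\varepsilon_\tau|\le 1$, which would only give parameter $1$.
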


Next, the following lemma establishes that the router's satisfaction estimate is optimistic, providing an upper bound on the true satisfaction probability. As more feedback is collected, the estimate becomes increasingly accurate, which plays a key role in the subsequent regret analysis.
\begin{lemma}[UCB Bounds]
\label{lem:ucb_bounds}
On the confidence event of Lemma~\ref{lem:confidence}, for every
$t\in[T]$ and $a\in\mathcal A\setminus\{\tilde a\}$,\footnote{For the fallback action, the UCB is set to its known satisfaction probability $\nu$.}
\begin{enumerate}
\item The UCB is optimistic:
\begin{equation}
p_{a,t}^*
\leq
\operatorname{UCB}_t(a).
\label{eq:ucb_optimism}
\end{equation}

\item The optimism gap satisfies
\begin{equation}
\operatorname{UCB}_t(a)-p_{a,t}^*
\leq
2\alpha_t\|x_{a,t}\|_{V_t^{-1}}.
\label{eq:ucb_upper_deviation}
\end{equation}
\end{enumerate}
\end{lemma}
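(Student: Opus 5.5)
The plan is to work on the confidence event of Lemma~\ref{lem:confidence} and to handle the two cases created by the truncation at $1$ in \eqref{eq:exact_ucb}. Throughout, fix $t\in[T]$ and $a\in\mathcal A\setminus\{\tilde a\}$, and write
\[
\hat p_{a,t}:=\mu(x_{a,t}^{\top}\widehat\theta_t),\qquad w_{a,t}:=\alpha_t\|x_{a,t}\|_{V_t^{-1}}.
\]
The only input needed is the link-error bound \eqref{eq:link_error_bound}, which gives $|\hat p_{a,t}-p^*_{a,t}|\le w_{a,t}$.

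\emph{Optimism.} From \eqref{eq:link_error_bound} we get $p^*_{a,t}\le \hat p_{a,t}+w_{a,t}$. We also have $p^*_{a,t}=\mu(\cdot)\le 1$, since the logistic link takes values in $(0,1)$. Hence $p^*_{a,t}\le\min\{1,\hat p_{a,t}+w_{a,t}\}=\operatorname{UCB}_t(a)$, which is \eqref{eq:ucb_optimism}.

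\emph{Optimism gap.} Because the minimum with $1$ can only decrease the value, $\operatorname{UCB}_t(a)\le \hat p_{a,t}+w_{a,t}$ in both cases of the truncation. Subtracting $p^*_{a,t}$ gives
\[
\operatorname{UCB}_t(a)-p^*_{a,t}\le (\hat p_{a,t}-p^*_{a,t})+w_{a,t}\le 2w_{a,t},
\]
where the last step applies \eqref{eq:link_error_bound} again, this time in the opposite direction. This is \eqref{eq:ucb_upper_deviation}.

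There is no genuine obstacle here. All of the probabilistic content sits inside Lemma~\ref{lem:confidence}, and the argument above is deterministic on its event. The one point to check is that the truncation at $1$ does not break optimism, and this holds because $p^*_{a,t}\le 1$. The fallback action $\tilde a$ is excluded from the statement, so it requires no separate treatment; in any case $\operatorname{UCB}_t(\tilde a)=\nu=p^*_{\tilde a,t}$ holds trivially.
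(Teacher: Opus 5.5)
Your proof is correct and matches the paper's argument. Optimism comes from the link-error bound together with $p^*_{a,t}\le 1$, which ensures the truncation at $1$ does no harm, and the gap bound comes from $\operatorname{UCB}_t(a)\le \mu(x_{a,t}^{\top}\widehat\theta_t)+\alpha_t\|x_{a,t}\|_{V_t^{-1}}$ followed by a second use of that bound; the only cosmetic difference is that the paper re-derives the link-error bound from $\|\widehat\theta_t-\theta^*\|_{V_t}\le\alpha_t$ via the mean-value theorem and $\mu'\le 1$, whereas you cite \eqref{eq:link_error_bound} directly from Lemma~\ref{lem:confidence}.
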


\subsection{Control of the Satisfaction Queue}

Next, we show that the virtual satisfaction queue remains controlled with high probability.

\begin{lemma}[High-Probability Queue Bound]
\label{lem:dual_bound}
Suppose Assumptions~\ref{ass:boundedness} and~\ref{ass:slater} hold.
For any $\delta\in(0,1)$, there exist
$$
B_\lambda=\frac{6}{(1-\xi)\epsilon}+\widetilde{\mathcal O}(1/\sqrt T)
$$
such that, with probability at least $1-\delta$,
\begin{equation}
\max_{1\leq t\leq T+1}\lambda_t
\leq
B_\lambda.
\label{eq:dual_bound_main}
\end{equation}
\end{lemma}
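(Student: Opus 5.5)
The plan is a drift argument for the queue, driven by the fallback action, followed by a concentration step for the martingale noise in $Y_t$.

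\textbf{Step 1: a deterministic threshold for $\lambda_t$.} I would first show that once $\lambda_t$ is large, the router's selection guarantees a conditional satisfaction probability strictly above the buffered target. Let $\bar\lambda$ be a threshold of order $1/((1-\xi)\epsilon)$. Since $a_t$ minimizes $c(a)-\lambda_t\operatorname{UCB}_t(a)$, comparing with the fallback gives
\[
\lambda_t\operatorname{UCB}_t(a_t)\geq \lambda_t\nu + c(a_t)-c(\tilde a)\geq \lambda_t\nu-1,
\]
so $\operatorname{UCB}_t(a_t)\geq \nu-1/\lambda_t$. If $a_t=\tilde a$, the conditional mean of $Y_t$ is exactly $\nu$. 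Otherwise, on the confidence event of Lemma~\ref{lem:confidence}, Lemma~\ref{lem:ucb_bounds} gives $p^*_{a_t,t}\geq \operatorname{UCB}_t(a_t)-2\alpha_t\|x_{a_t,t}\|_{V_t^{-1}}$.

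\textbf{Main obstacle: the optimism gap.} The gap $2\alpha_t\|x\|_{V_t^{-1}}$ is not small at every round, so a per-round drift bound fails. I would handle it cumulatively. By the elliptical potential lemma, $\sum_t \min\{1,\|x_{a_t,t}\|_{V_t^{-1}}\}$ is $\widetilde{\mathcal O}(\sqrt{dT})$ by Cauchy--Schwarz, and $\beta\geq L^2$ keeps each term at most $1$. Hence the total optimism gap over any interval is $\widetilde{\mathcal O}(\sqrt{T})$. After multiplication by $\eta=C_\eta/\sqrt T$, this contributes only an $\widetilde{\mathcal O}(1)$ additive term. This may be what the $\widetilde{\mathcal O}(1/\sqrt T)$ remainder in $B_\lambda$ absorbs, possibly after rescaling; if the constants do not fit, I would fall back to bounding the bad rounds by a separate counting argument.

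\textbf{Step 2: drift on an excursion.} Whenever $\lambda_t\geq\bar\lambda$, the conditional drift is
\[
\mathbb E[\lambda_{t+1}-\lambda_t\mid\mathcal G_t]\leq \eta\bigl(1-\epsilon+\rho-\nu+1/\lambda_t+\text{gap}_t\bigr),
\]
where $1-\epsilon-\nu=(\xi-1)\epsilon$. Choosing $\bar\lambda=2/((1-\xi)\epsilon)$ makes $1/\lambda_t\leq (1-\xi)\epsilon/2$, so the drift is at most $-\eta(1-\xi)\epsilon/2+\eta\rho+\eta\,\text{gap}_t$. The term $\rho=O(1/\sqrt T)$ is negligible for large $T$.

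\textbf{Step 3: concentration and assembly.} Consider any excursion interval $[s,t]$ on which $\lambda>\bar\lambda$, starting from $\lambda_s\leq\bar\lambda+\eta$. There the queue evolves without the max-projection, so
\[
\lambda_{t}-\lambda_s=\eta\sum_{\tau=s}^{t-1}\bigl(1-\epsilon+\rho-Y_\tau\bigr).
\]
I would write $Y_\tau=p^*_{a_\tau,\tau}+M_\tau$, where the $M_\tau$ are bounded martingale differences. Azuma--Hoeffding, with a union bound over all $O(T^2)$ pairs $(s,t)$ and failure probability $\delta/2$, gives
\[
\Bigl|\sum_{\tau=s}^{t-1}M_\tau\Bigr|\leq\sqrt{2(t-s)\log(4T^2/\delta)}.
\]
The excursion then satisfies
\[
\lambda_t\leq \bar\lambda+\eta+\eta\Bigl[-\tfrac{(1-\xi)\epsilon}{2}n+\sqrt{2n\log(4T^2/\delta)}+\widetilde{\mathcal O}(\sqrt{T})\Bigr],\qquad n=t-s.
\]
Maximizing $-an+b\sqrt n$ over $n$ gives at most $b^2/(4a)$. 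With $\eta=C_\eta/\sqrt T$, the bracketed terms contribute $\widetilde{\mathcal O}(1/\sqrt T)$ beyond a constant multiple of $1/((1-\xi)\epsilon)$. Collecting the constants, with $\bar\lambda$ and the $C_\eta$ normalization, gives the leading $6/((1-\xi)\epsilon)$. The confidence event fails with probability $\delta_{\mathcal E}=T^{-\zeta}\leq\delta/2$ for $T$ large. A final union bound over this event and the Azuma bound yields \eqref{eq:dual_bound_main} with probability at least $1-\delta$.
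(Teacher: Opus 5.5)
Your skeleton matches the paper's: the fallback comparison giving $\operatorname{UCB}_t(a_t)\ge\nu-1/\lambda_t$, a threshold $\bar\lambda$ of order $1/((1-\xi)\epsilon)$, an excursion decomposition without projection, and Azuma plus a union bound with $\delta_{\mathcal E}$. However, the route you put forward as primary for the optimism gap fails. Summing the gaps with the elliptical potential lemma gives
\[
\eta\sum_{\tau}2\alpha_\tau\|x_{a_\tau,\tau}\|_{V_\tau^{-1}}
\le\frac{C_\eta}{\sqrt T}\,2\alpha_T\sqrt{2dT\log\Bigl(1+\frac{TL^2}{d\beta}\Bigr)}
=2C_\eta\alpha_T\sqrt{2d\log\Bigl(1+\frac{TL^2}{d\beta}\Bigr)}.
\]
Since $\alpha_T$ is of order $\sqrt{d\log T}/\kappa$, this bound is of order $C_\eta d\log T/\kappa$. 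It is $\widetilde{\mathcal O}(1)$, not $\widetilde{\mathcal O}(1/\sqrt T)$, and it grows with $T$. So the bracket in your Step 3, which carries this $\widetilde{\mathcal O}(\sqrt T)$ term, does not contribute $\widetilde{\mathcal O}(1/\sqrt T)$ as you assert, and the leading term of $B_\lambda$ would acquire a factor $d\log T/\kappa$.

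This matters downstream. The satisfaction guarantee needs $B_\lambda/C_\eta<C_\rho=5\sqrt{2\log(4/\delta)}$ for all large $T$, but this term alone makes $B_\lambda/C_\eta$ grow like $d\log T/\kappa$, whatever $C_\eta$ is. The counting argument you list only as a contingency is what the paper does, and it has to be the main route. Call round $t$ exceptional if $a_t\neq\tilde a$ and $2\alpha_t\|x_{a_t,t}\|_{V_t^{-1}}$ exceeds a fixed fraction of $(1-\xi)\epsilon$, say a quarter. Since $\beta\ge L^2$ and $\sum_t\|x_{a_t,t}\|_{V_t^{-1}}^2\le 2d\log(1+TL^2/(d\beta))$ along the non-fallback rounds, there are only $N_T=\mathcal O\bigl(d\alpha_T^2\log(1+TL^2/(d\beta))/((1-\xi)^2\epsilon^2)\bigr)$ such rounds, which is polylogarithmic in $T$. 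Charging drift $1$ on them costs $\eta N_T=\widetilde{\mathcal O}(1/\sqrt T)$. With $\bar\lambda=2/((1-\xi)\epsilon)$, every other round above the threshold keeps drift at most $-(1-\xi)\epsilon/4+\rho<0$.

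With that substitution your proof is correct, and your treatment of the noise is sharper than the paper's. The paper only uses that the drift is nonpositive above its threshold, and it bounds every interval sum of martingale differences crudely by $2\sqrt{2T\log(2/\delta')}$. After multiplying by $\eta$ this leaves the constant $2C_\eta\sqrt{2\log(2/\delta')}\le 2/((1-\xi)\epsilon)$. Together with the threshold $2/((1-c_1)\epsilon)\le 4/((1-\xi)\epsilon)$, this is where the paper's $6$ comes from.

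Your step $\max_n(-an+b\sqrt n)=b^2/(4a)$ uses the strictly negative drift and a union bound over intervals, and it puts the noise into the $\widetilde{\mathcal O}(1/\sqrt T)$ remainder. You therefore end up with leading term $\bar\lambda=2/((1-\xi)\epsilon)$, which implies the lemma. But your closing claim that collecting constants ``gives the leading $6$'' is not what your computation produces; you should simply say that the smaller constant suffices.
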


The queue bound in \eqref{eq:dual_bound_main} establishes that the dual queue remains bounded throughout the execution of the algorithm. This result is then used to show that the desired quality constraint is satisfied with high probability. 
\begin{figure*}[t]
    \centering
    \includegraphics[width=0.32\textwidth]{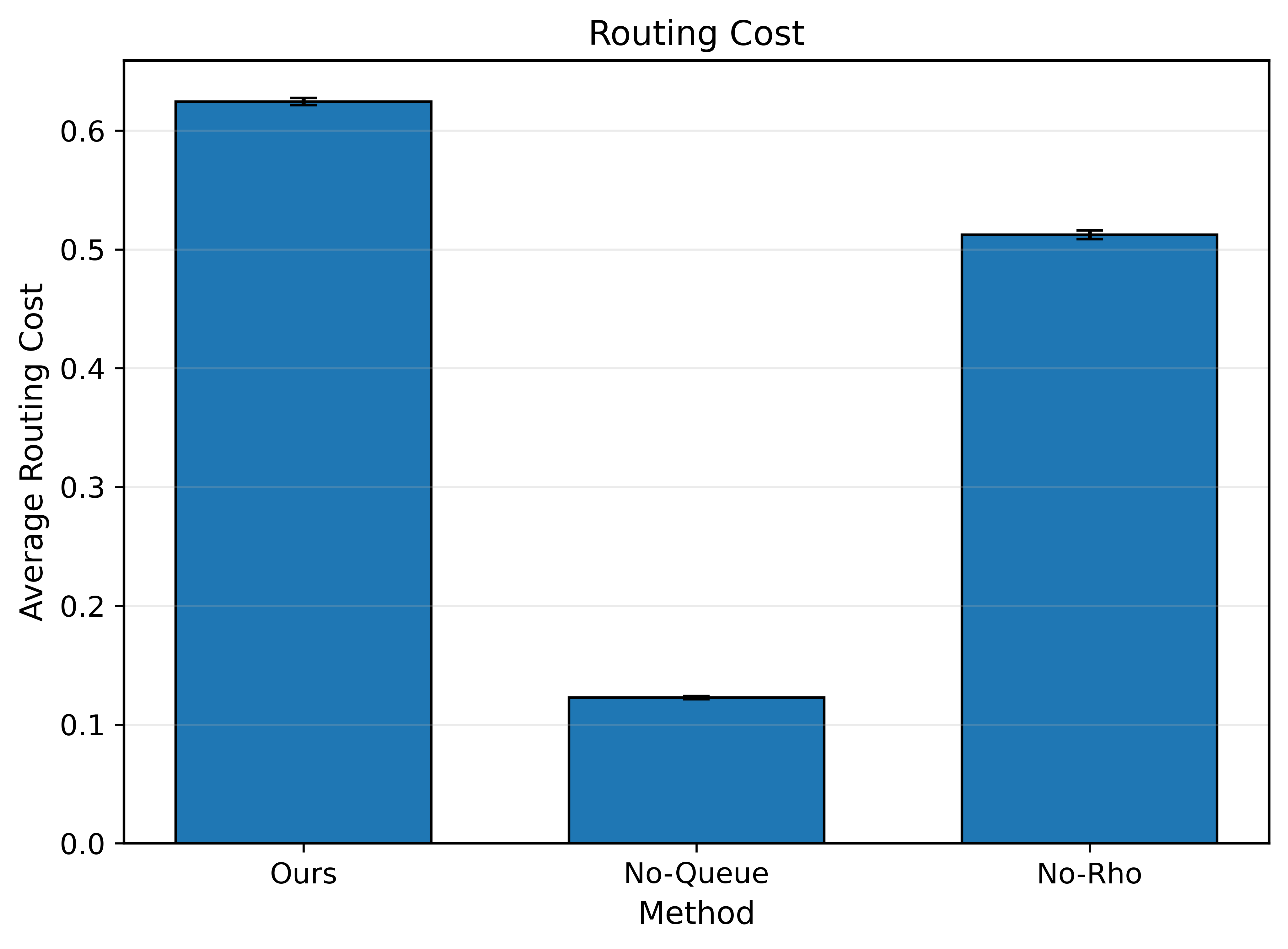}
    \hfill
    \includegraphics[width=0.32\textwidth]{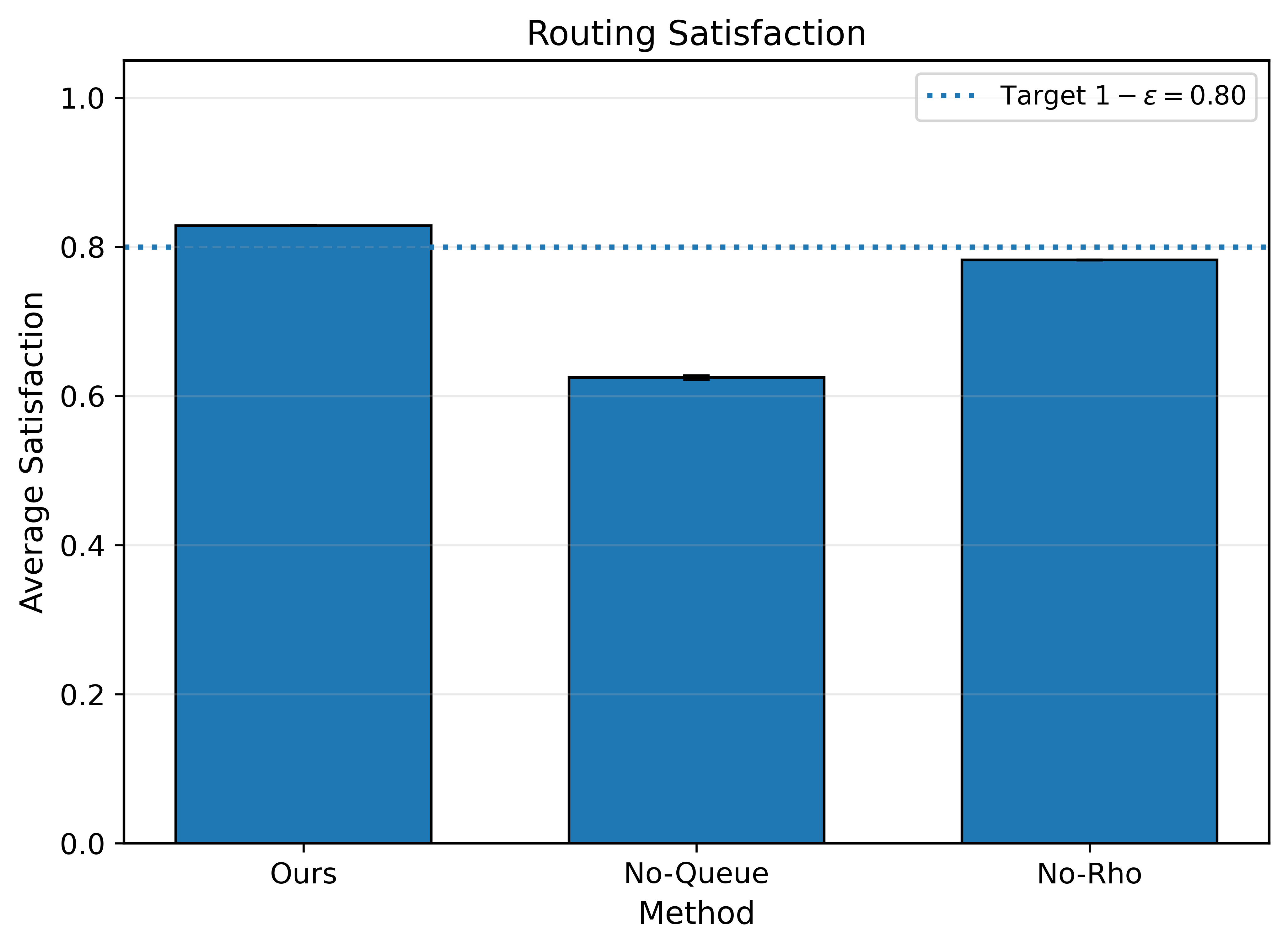}
    \hfill
    \includegraphics[width=0.32\textwidth]{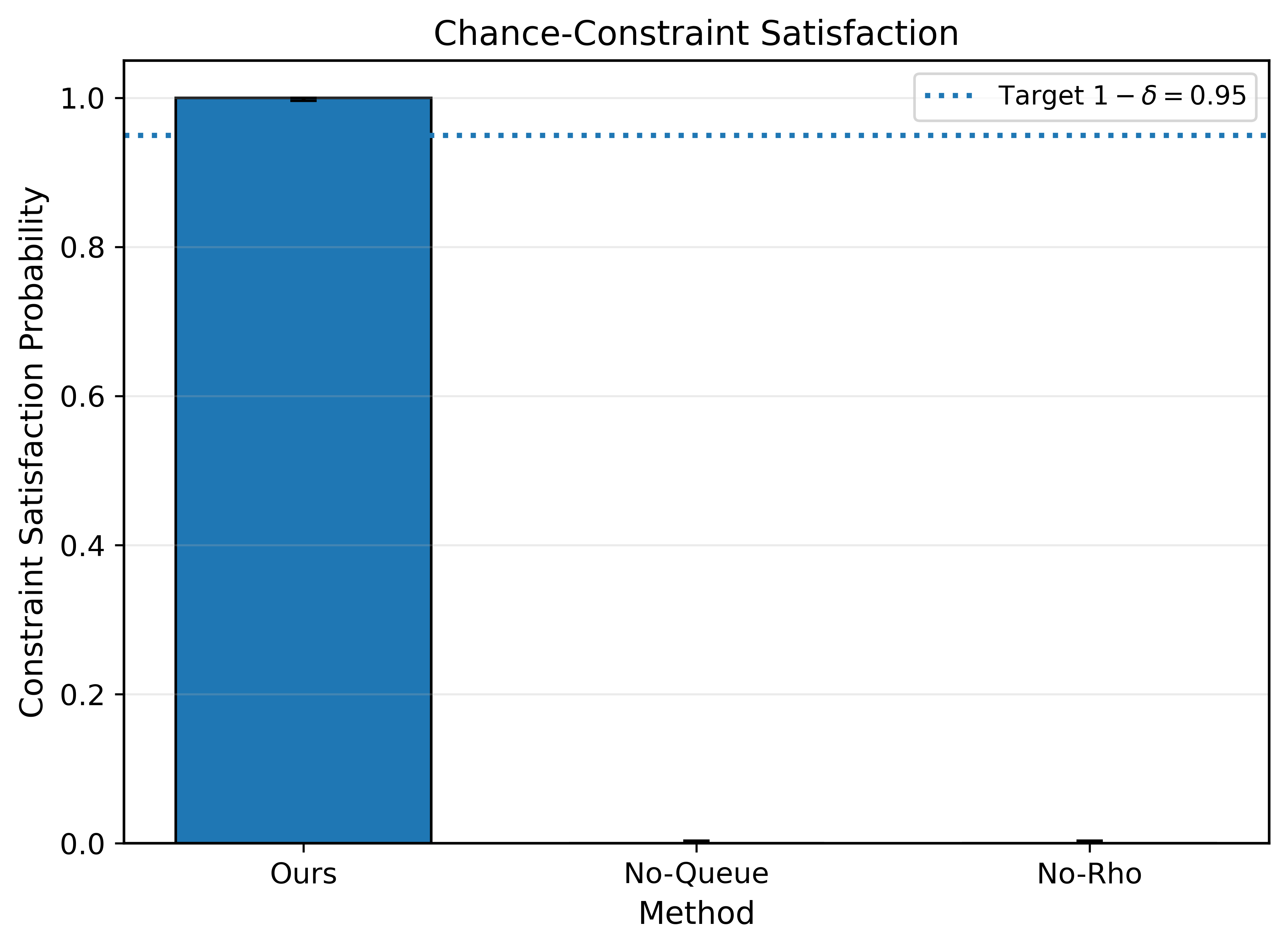}

    \caption{Overall performance and ablation results. Error bars denote $95\%$ confidence intervals. The dotted lines indicate the required satisfaction level $1-\epsilon=0.8$ and constraint probability $1-\delta=0.95$, respectively.}
    \label{fig:ablation}
\end{figure*}

\subsection{Main Guarantee}

We can now state the main result.

\begin{theorem}[Cost Regret and Satisfaction Guarantee]
\label{thm:cost_regret}
Suppose Assumptions~\ref{ass:boundedness} and~\ref{ass:slater} hold, $\xi < 1$ and
the constants in Algorithm~\ref{alg:primal_dual_glm_ucb_final} are chosen
as specified in Section~\ref{sec:algorithm}. Then, there exists a constant $T_0$ such that for $T \geq T_0$ the proposed router
satisfies
\begin{equation}
\mathbb P\left(
\sum_{t=1}^T Y_t
\geq
(1-\epsilon)T
\right)
\geq
1-\delta,
\label{eq:sla_feasibility_statement}
\end{equation}
and its expected cumulative cost regret relative to the optimal stationary
contextual oracle satisfies
\begin{equation}
\operatorname{Regret}_T
\leq
\widetilde{\mathcal O}\!\left(
\frac{(d+\sqrt{\log(4/\delta)})\sqrt{T}}
{\kappa(1-\xi)\epsilon}
\right)
\label{eq:regret_bound_statement}
\end{equation}
\end{theorem}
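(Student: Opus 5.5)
The proof splits into two parts: a feasibility part, which follows from the queue bound, and a regret part, which follows from a drift-plus-penalty argument against the oracle.

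\textbf{Feasibility.} Unrolling the dual update \eqref{eq:exact_dual_update} and using $\max\{0,u\}\ge u$ gives $\lambda_{T+1}\ge \lambda_1+\eta\sum_{t=1}^T(1-\epsilon+\rho-Y_t)$. Since $\lambda_1=0$, this yields
\[
\sum_{t=1}^T Y_t \;\ge\; (1-\epsilon+\rho)T-\frac{\lambda_{T+1}}{\eta}.
\]
On the event of Lemma~\ref{lem:dual_bound} we have $\lambda_{T+1}\le B_\lambda$. Hence it suffices that $\rho T\ge B_\lambda/\eta$, i.e.
\[
C_\rho\sqrt T\;\ge\;B_\lambda\frac{\sqrt T}{C_\eta}.
\]
With $C_\eta=1/((1-\xi)\epsilon\sqrt{2\log(4/\delta)})$ we get $B_\lambda/C_\eta=(6+o(1))\sqrt{2\log(4/\delta)}$. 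This is close to, but not below, $C_\rho=5\sqrt{2\log(4/\delta)}$, so the constants must be checked against the actual proof of Lemma~\ref{lem:dual_bound}. I expect its leading constant is really governed by a drift term of order $1/((1-\xi)\epsilon)$ plus $\eta$-scale overshoot. That gives $B_\lambda/\eta\le \rho T$ for $T\ge T_0$, with the $\widetilde{\mathcal O}(1/\sqrt T)$ remainder absorbed by taking $T_0$ large. If the constants do not close, the fallback is to re-derive the queue bound directly: the fallback action has slack $\nu-(1-\epsilon)=(1-\xi)\epsilon$, and the conditional drift is negative once $\lambda_t$ exceeds a threshold of order $1/((1-\xi)\epsilon)$. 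The failure probability is split between the confidence event ($\delta_{\mathcal E}=T^{-\zeta}$) and an Azuma/Freedman step inside the queue bound, with the $\log(4/\delta)$ allotting $\delta/2$ to each.

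\textbf{Regret.} I would use the Lyapunov function $\lambda_t^2/2$. Expanding the update,
\[
\frac{\lambda_{t+1}^2-\lambda_t^2}{2}\le \eta\lambda_t(1-\epsilon+\rho-Y_t)+\frac{\eta^2}{2}.
\]
Take conditional expectations, so that $\mathbb E[Y_t\mid\mathcal G_t]=p^*_{a_t,t}$, and add $\eta\,c(a_t)$. Then write $p^*_{a_t,t}=\mathrm{UCB}_t(a_t)-(\mathrm{UCB}_t(a_t)-p^*_{a_t,t})$. The primal rule \eqref{eq:exact_primal} gives
\[
c(a_t)-\lambda_t\mathrm{UCB}_t(a_t)\le \sum_a\pi^*(a\mid p_t)\bigl[c(a)-\lambda_t\mathrm{UCB}_t(a)\bigr].
\]
Lemma~\ref{lem:ucb_bounds}(1) then gives $\mathrm{UCB}_t(a)\ge p^*_{a,t}$, which remains true for $\tilde a$ since its UCB equals $\nu$ exactly. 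Since $p_t$ is independent of the past, the oracle term has conditional expectation $\mathbb E[c(a^*_t)]-\lambda_t\mathbb E[p^*_{a^*_t,t}]$. By Lemma~\ref{lem:necessary_expectation} with $\rho^*=\rho$ (valid when $\rho\le\rho^*_{\max}$, again arranged via $T_0$), $\mathbb E[p^*_{a^*_t,t}]\ge 1-\epsilon+\rho$, so the $\lambda_t(1-\epsilon+\rho)$ terms cancel.

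Summing over $t$, telescoping, and dividing by $\eta$ gives
\[
\mathrm{Regret}_T\le \frac{\eta T}{2}+\mathbb E\Bigl[\sum_t\lambda_t\bigl(\mathrm{UCB}_t(a_t)-p^*_{a_t,t}\bigr)\Bigr]+T\cdot\Pr(\text{bad}).
\]
The first term is $O(C_\eta\sqrt T)$. On the good event, Lemma~\ref{lem:ucb_bounds}(2) and Lemma~\ref{lem:dual_bound} bound the middle term by $2B_\lambda\alpha_T\sum_t\min\{1,\|x_{a_t,t}\|_{V_t^{-1}}\}$ over non-fallback rounds. The UCB is clipped at 1, so each summand is at most 1, and the elliptical potential lemma bounds the sum by $\sqrt{2dT\log(1+TL^2/(d\beta))}$; this uses $\beta\ge L^2$ so that $\|x\|_{V_t^{-1}}\le1$. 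Since $\alpha_T=\widetilde{\mathcal O}((\sqrt d+\sqrt{\zeta\log T})/\kappa)$ and $B_\lambda=O(1/((1-\xi)\epsilon))$, the product is $\widetilde{\mathcal O}(d\sqrt T/(\kappa(1-\xi)\epsilon))$.

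\textbf{Bad event (main obstacle).} On the bad event $\lambda_t$ is not controlled, since Lemma~\ref{lem:dual_bound} only holds with probability $1-\delta$ with $\delta$ fixed. I would avoid it by bounding the middle term differently: use the deterministic bound $\lambda_t\le\eta t$ only on the confidence failure event, whose probability $T^{-\zeta}$ with $\zeta>2.5$ makes its contribution $O(\eta T^2\cdot T^{-\zeta})=o(1)$. On the queue-bound failure I would instead use a truncation argument: an expected bound of the form $\mathbb E[\lambda_t]=O(1/((1-\xi)\epsilon))$ follows from the same negative-drift argument, giving exponential tails for $\lambda_t$ rather than a fixed-$\delta$ statement. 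Combining these terms gives \eqref{eq:regret_bound_statement}, with the $\sqrt{\log(4/\delta)}$ factor coming from $\eta T/2$ and the $\rho$-dependence of $B_\lambda$.
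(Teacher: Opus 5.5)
\textbf{The cancellation against the oracle is invalid.} You cancel the $\lambda_t(1-\epsilon+\rho)$ terms by invoking Lemma~\ref{lem:necessary_expectation} with $\rho^*=\rho$, and you assume $\rho\le\rho^*_{\max}$ can be arranged by taking $T\ge T_0$. It cannot. Both quantities scale as $T^{-1/2}$, so their ratio does not improve with $T$. In fact Hoeffding's inequality gives $\rho^*_{\max}(T,\epsilon,\delta)\le\sqrt{\log(1/\delta)/(2T)}$ for every $T$, while $\rho=5\sqrt{2\log(4/\delta)/T}$ is more than ten times larger.

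Feasibility of $\pi^*$ only guarantees $\mathbb E[p^*_{a_t^*,t}]\ge 1-\epsilon+\rho^*_{\max}$. So after the primal comparison, every round leaves a residual $\lambda_t\bigl(1-\epsilon+\rho-\mathbb E[p^*_{a_t^*,t}]\bigr)\le\lambda_t(\rho-\rho^*_{\max})^+\le\lambda_t\rho$. The paper keeps this residual and bounds its sum by $TB_\lambda\rho=C_\rho B_\lambda\sqrt T=\widetilde{\mathcal O}\bigl(\sqrt{\log(4/\delta)}\sqrt T/((1-\xi)\epsilon)\bigr)$.

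This term is the actual source of the $\sqrt{\log(4/\delta)}$ in \eqref{eq:regret_bound_statement}. Your attribution is off on both counts: $\eta T/2=C_\eta\sqrt T/2$ scales like $1/\sqrt{\log(4/\delta)}$, and $B_\lambda$ depends on $\rho$ only at order $T^{-1/2}$. The repair is easy and the final rate survives. As written, however, your derivation rests on a false inequality and would claim a bound without this term.

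\textbf{The bad event and the feasibility constants.} The obstacle you identify goes away once you notice that Lemma~\ref{lem:dual_bound} holds at every failure level. The paper applies it at level $1/T$. This only inflates $B_\lambda$ by a $\sqrt{\log T}$ factor, through $2C_\eta\sqrt{2\log(2T)}$, which is absorbed into $\widetilde{\mathcal O}$. The paper then argues as follows:
\begin{itemize}
\item It defines $\tau$ as the first round at which either the confidence event fails or $\lambda_t>B_\lambda$.
\item It runs the decomposition only on $t\le(\tau-1)\wedge T$. There optimism and $\lambda_t\le B_\lambda$ both hold, and $\mathbf 1\{t<\tau\}\lambda_t$ is predictable.
\item It charges each remaining round at most $1$, for a total of $T\,\mathbb P(\tau\le T)\le T^{1-\zeta}+1\le 2$.
\end{itemize}
So your own $T\cdot\Pr(\text{bad})$ term already works once ``bad'' is defined at level $1/T$.

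Your truncation alternative, as stated, is not enough. A bound on $\mathbb E[\lambda_t]$ does not control $\mathbb E\bigl[\sum_t\lambda_t\alpha_t\|x_{a_t,t}\|_{V_t^{-1}}\bigr]$: $\lambda_t$ is correlated with the widths, and the elliptical potential lemma bounds only the pathwise sum. You would need $\mathbb E[\max_t\lambda_t]$, which amounts to integrating the lemma's tail over the failure level.

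On feasibility, your argument is the paper's. The constant check you defer closes as you suspected, provided you use the form inside the lemma's proof before the relaxation $1/(1-c_1)\le 2/(1-\xi)$. That form is $B_\lambda=\frac{2}{(1-c_1)\epsilon}+2C_\eta\sqrt{2\log(2/\delta')}+\widetilde{\mathcal O}(T^{-1/2})$. Since $c_1\to\xi$ and $\delta'\to\delta$, this gives $B_\lambda/C_\eta\to 2\sqrt{2\log(4/\delta)}+2\sqrt{2\log(2/\delta)}<C_\rho$. The fluctuation contribution here is the order-one Azuma term, not an $\eta$-scale overshoot.
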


\begin{proof}[Proof Sketch]
The proof has two main steps. First, Lemmas~\ref{lem:confidence} and~\ref{lem:ucb_bounds} establish the accuracy of the router's optimistic satisfaction estimates, with estimation errors decreasing as more feedback is collected. Under the decision rule in Eq.~\eqref{eq:exact_primal}, the router balances cost against these estimates, allowing the cost regret to be bounded in terms of the estimation errors weighted by the virtual queue $\lambda_t$. Lemma~\ref{lem:dual_bound} plays a crucial role by establishing a bound on the queue, thereby limiting the impact of estimation errors on regret. Summing these errors over time yields the regret bound in Eq.~\eqref{eq:regret_bound_statement}. Second, the queue update in Eq.~\eqref{eq:exact_dual_update} tracks accumulated satisfaction deficits and increases the queue when realized satisfaction falls below the buffered target. The boundedness of the queue, together with the safety buffer $\rho$ and a concentration argument, ensures that the cumulative satisfaction requirement in Eq.~\eqref{eq:sla_feasibility_statement} holds with probability at least $1-\delta$. The detailed proofs are provided in Appendices~\ref{sec:regret_proof_detailed} and~\ref{sec:quality_bound}.
\end{proof}
Theorem~\ref{thm:cost_regret} shows that the router can learn user preferences while maintaining the required satisfaction level, incurring only sublinear additional cost relative to an oracle. Moreover, the dependence of our regret bounds on the slack factor $\xi$ indicates that achieving lower routing costs becomes increasingly challenging as the fallback performance $\nu$ approaches the target satisfaction level $1-\epsilon$.

\section{Experimental Results}
We evaluate the proposed algorithm through five experiments and a real-LLM case study, examining routing baselines, ablations, real-world performance, personalization, and scalability with the decision horizon. Details are provided in Appendix~\ref{section:expts_add}.

\begin{figure*}[t]
    \centering
    \includegraphics[width=0.45\textwidth]{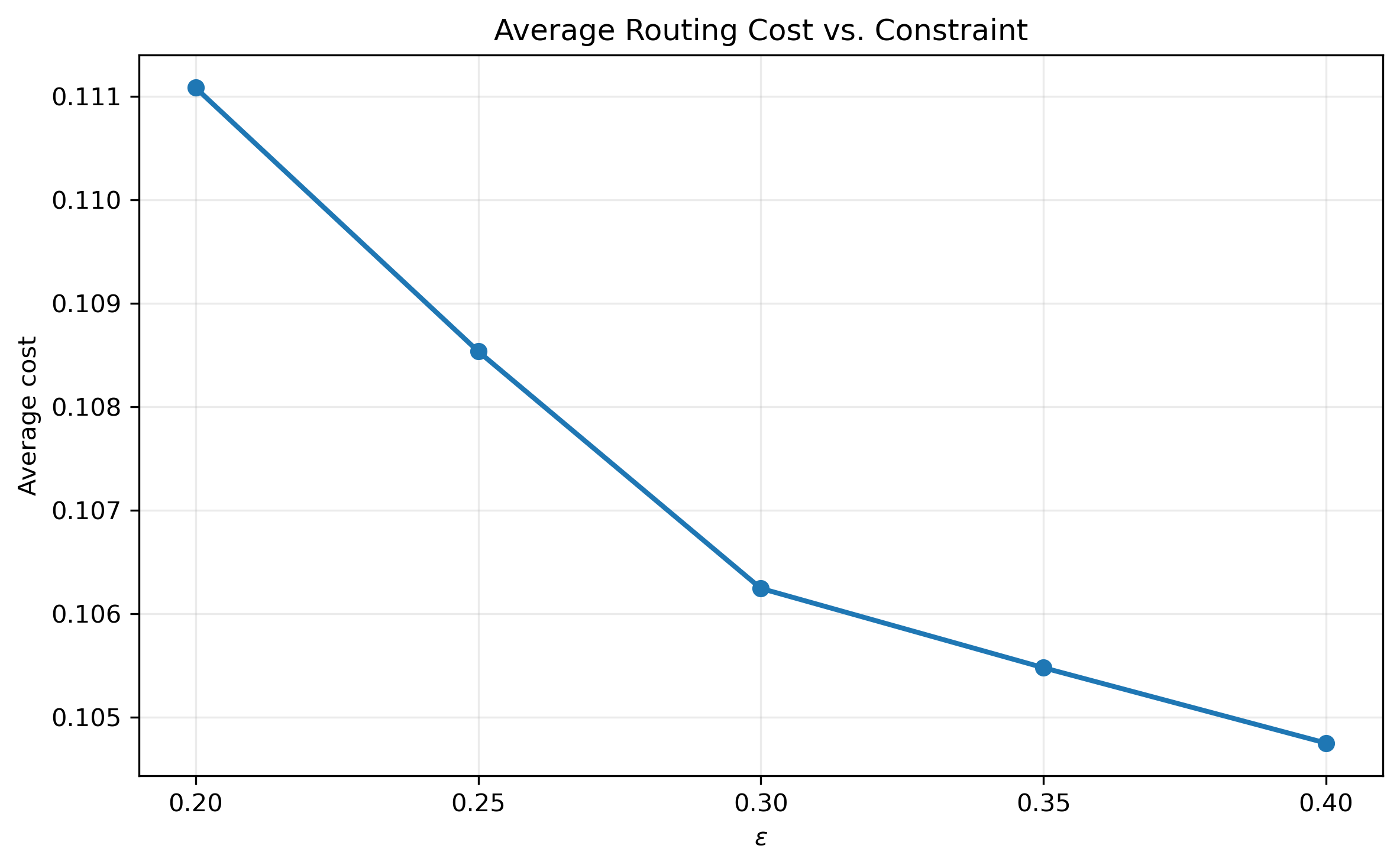}
    \hfill
    \includegraphics[width=0.53\textwidth]{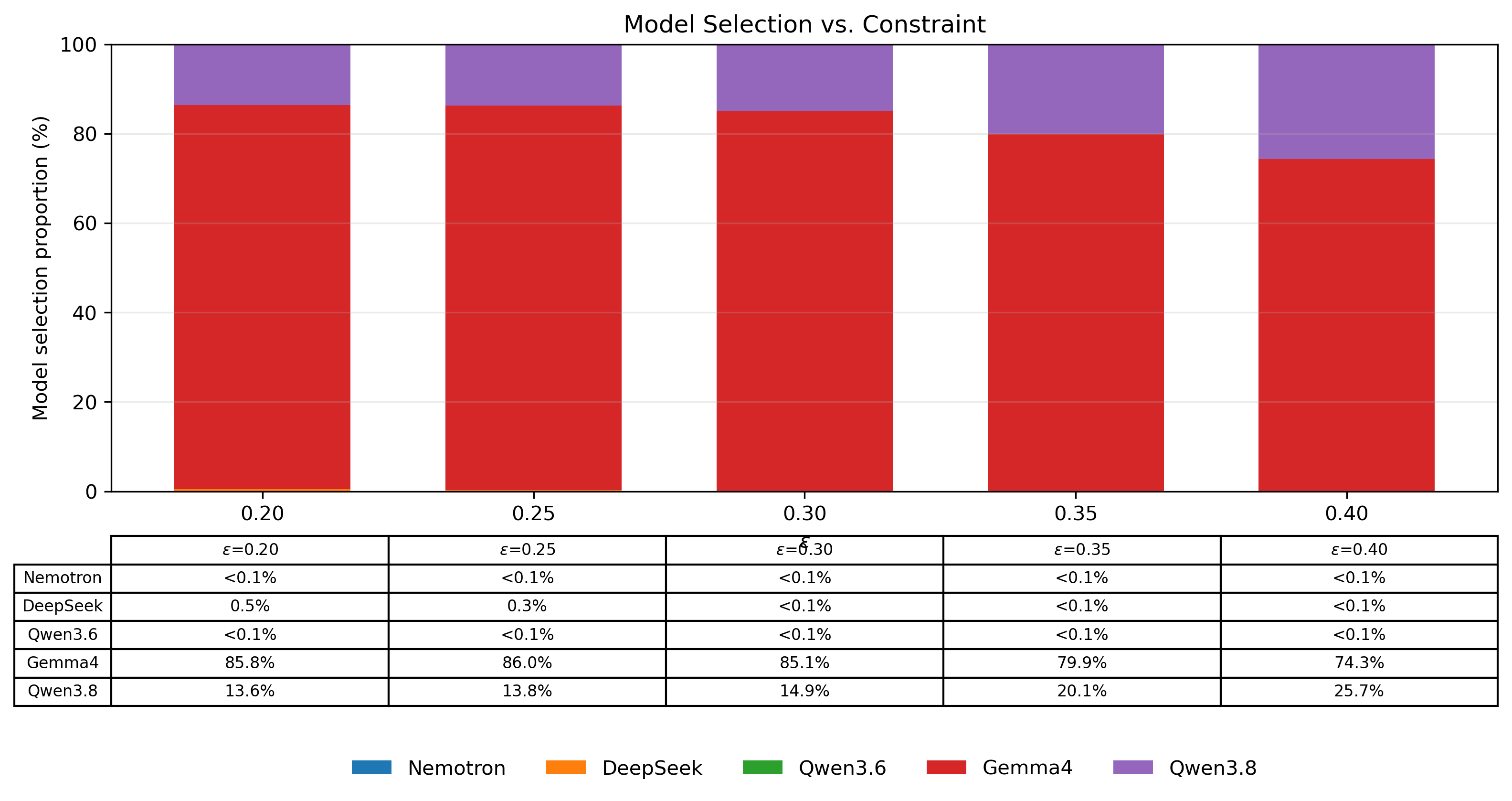}
    \caption{Real-LLM routing POC: (a) average routing cost as a function of $\epsilon$ and (b) model-selection composition as a function of $\epsilon$.}
    \label{fig:real_llm}
\end{figure*}
\begin{figure}[t]
    \centering
    \includegraphics[
        width=0.75\linewidth,
        trim=0 40 0 0,
        clip
    ]{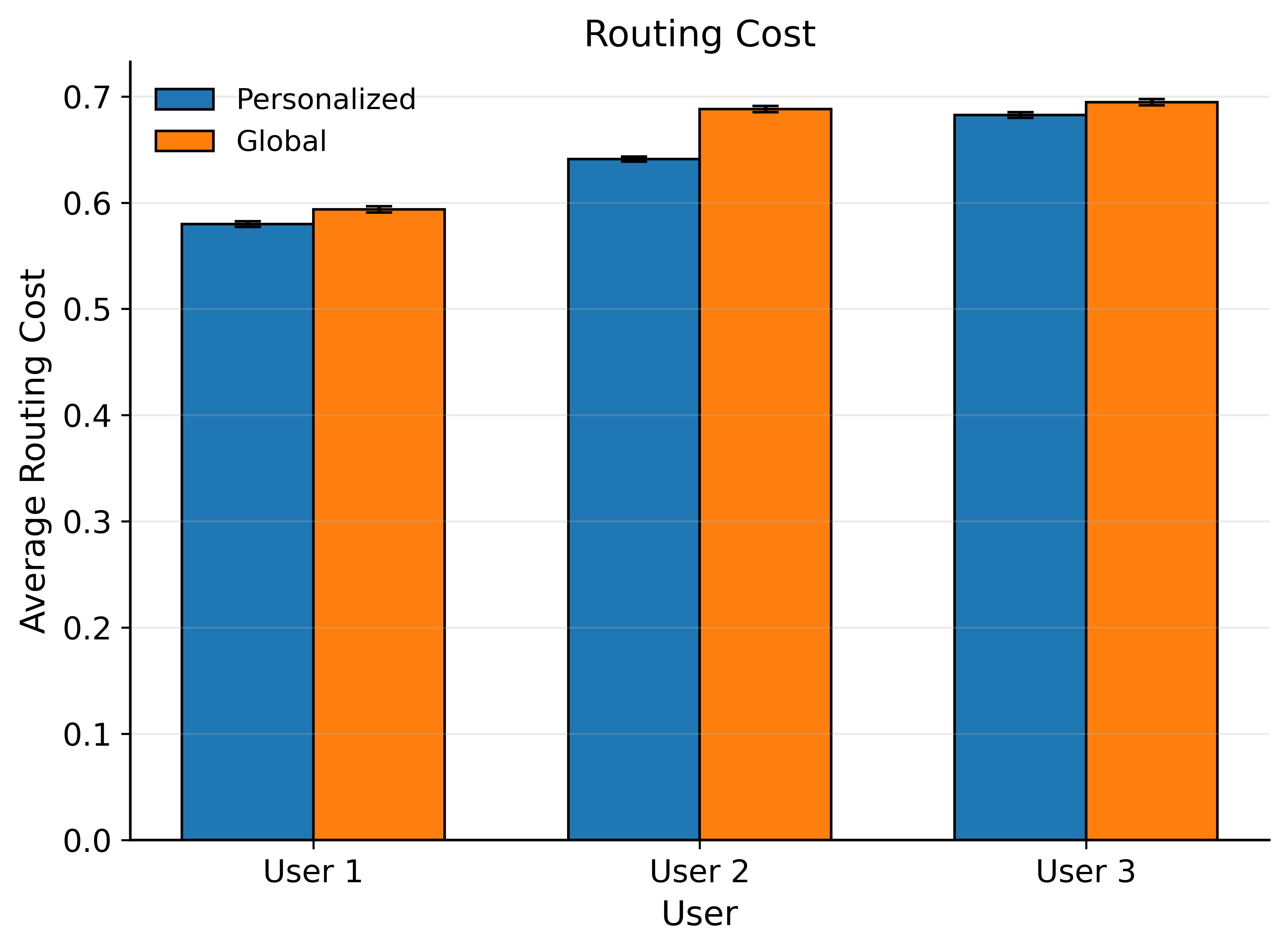}
    \caption{Personalized and global routing costs for Users 1--3.}
    \label{fig:personalization}
\end{figure}

\begin{figure}[t]
    \centering
    \includegraphics[
        width=0.75\linewidth,
        trim=0 40 0 0,
        clip
    ]{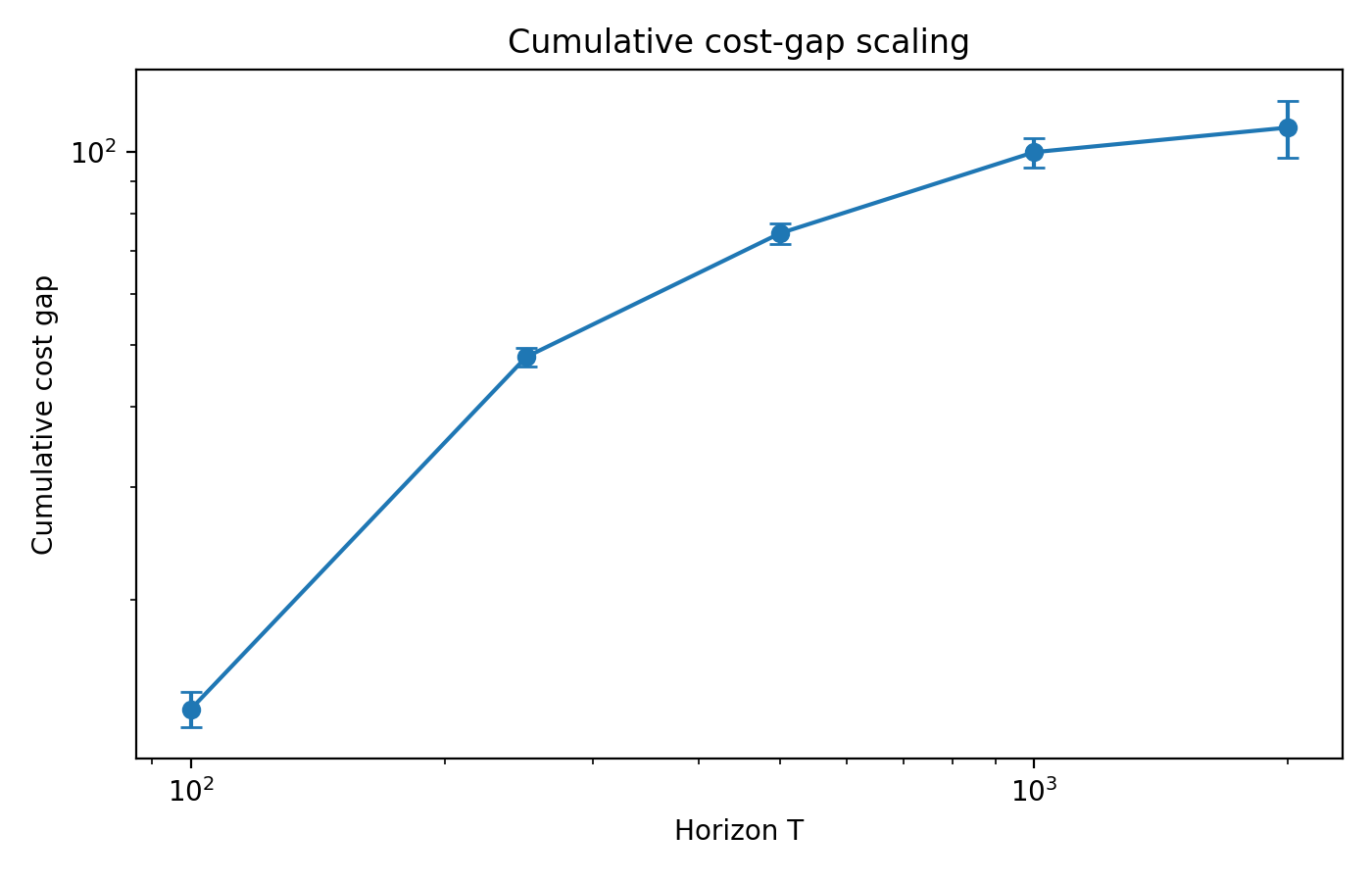}
    \caption{Per-round and cumulative cost gap relative to the oracle across horizons.}
    \label{fig:horizon}
\end{figure}

\subsection{Comparison with Alternative Routing Strategies}

We first compare the proposed method with three representative routing strategies: a SLARouter-style global constrained router~\cite{woisetschläger2026costoptimalllmroutinglimited}, a BaRP-style unconstrained personalized router~\cite{wei2025learning}, and an expectation-constrained primal-dual router ~\cite{yang2025data}. We vary the satisfaction tolerance $\epsilon$ while fixing $\delta=0.05$, $T=1000$, and all methods are evaluated over the same $1000$ replay trials.

Figure~\ref{fig:baseline_comparison} shows the resulting cost, average satisfaction, and empirical constraint satisfaction probability. As $\epsilon$ increases, the proposed method reduces its average routing cost because the required satisfaction level $1-\epsilon$ becomes less stringent. At the same time, its empirical constraint satisfaction probability remains close to the target $1-\delta$. We observe that the SLARouter does not exploit contextual information and therefore has a higher average cost, while the BaRP-style router and the expectation-constrained primal-dual router fail to satisfy the chance constraint. 

\subsection{Ablation}

Next, we examine the contributions of the dual queue and safety margin in the proposed algorithm. We compare \textbf{Ours} with \textbf{No-Queue}, which removes the $\lambda$ update equation, and \textbf{No-$\rho$}, which sets the safety margin to $\rho=0$. All methods are evaluated with $\epsilon=0.2$ and $\delta=0.05$. We report the average routing cost, the average satisfaction, and the probability of empirical constraint satisfaction with $95\%$ confidence intervals.

As shown in Figure ~\ref{fig:ablation}, \textbf{Ours} satisfies the empirical chance constraint in almost all trials. \textbf{No-Queue} and \textbf{No-$\rho$} achieve lower routing costs, but their empirical constraint-satisfaction probabilities are close to zero. These results suggest that the dual queue helps adapt routing decisions to accumulated constraint violations, while the safety margin provides additional protection against uncertainty in the learned satisfaction estimates.

\subsection{Real-LLM Routing}
\label{section:poc}

We next evaluate the proposed method on real LLMs using 100 queries from WildBench v2 and five available LLMs \footnote{Additional details are in Appendix \ref{section:expts_add}} as routing actions. For each query--model pair, the response is represented by a six-dimensional quality vector measuring correctness, relevance, completeness, clarity, conciseness, and safety, evaluated by a lightweight judge model. We then apply a fixed parameter $\theta^*$ in the logistic model of Section~\ref{sec:model} to generate simulated satisfaction outcomes.

Figure~\ref{fig:real_llm} illustrates the routing cost and model selection as the satisfaction tolerance $\epsilon$ varies. The average routing cost decreases as $\epsilon$ increases, since it permits more frequent selection of inexpensive models. In particular, the router increasingly selects Qwen3.8-27B, the lowest-cost model, as the constraint is relaxed. Conversely, for smaller $\epsilon$, it places greater weight on higher-satisfaction models such as DeepSeek-V4-Flash despite their higher cost. This illustrates how the routing policy adapts model selection to the quality--cost trade-off imposed by the chance constraint.

\subsection{Personalization}

We next evaluate whether explicitly modeling user-specific preferences can improve routing efficiency. We consider three users who receive the same sequence of requests but have different preference parameters. We compare a personalized learner, which maintains separate model estimates and safety queues for each user, with a global learner that pools observations across users. Both methods use the same requests and counterfactual outcomes. 

Figure~\ref{fig:personalization} reports the average routing cost for each user. The personalized learner achieves lower routing cost while maintaining satisfaction above the required level. We omit a separate satisfaction plot because all methods satisfy the constraint in this experiment; therefore, the cost comparison more clearly illustrates the benefit of personalization.

\subsection{Finite-Horizon Scaling}

Finally, we examine how the proposed method performs as the number of routing decisions increases. We consider horizons $T\in\{100,250,500,1000,2000\}$ and compare the cumulative cost of the proposed method with that of an oracle policy that knows the true model parameters.

As shown in Fig.~\ref{fig:horizon}, the cumulative cost gap relative to the oracle grows sublinearly over the evaluated horizons. This empirical behavior is consistent with the sublinear regret guarantee established in our analysis. Satisfaction results are omitted for brevity.

\section{Conclusion}
\label{sec:conclusion}

We presented a framework for online personalized LLM routing that learns user preferences, minimizes inference costs, and enforces a high-probability quality constraint. Our primal-dual GLM-UCB algorithm combines optimistic preference learning with a satisfaction-deficit queue and achieves sublinear cost regret while ensuring satisfaction with high probability. Experimental results demonstrate that the proposed method can reduce routing costs through personalization while maintaining the required quality level, including in a real-world LLM routing task.

Future research directions include accommodating nonstationary user preferences and incorporating richer feedback signals. Additionally, the current analysis assumes a uniform fallback satisfaction level. Extending the analysis to accommodate context-dependent or uncertain fallback guarantees is another interesting direction for future work.

\bibliography{refs}

@inproceedings{agrawal2016linear, title={Linear Contextual Bandits with Knapsacks}, author={Agrawal, Shipra and Devanur, Nikhil R.}, booktitle={Advances in Neural Information Processing Systems}, volume={29}, year={2016} }

@article{chen2024frugalgpt, title={FrugalGPT: How to Use Large Language Models While Reducing Cost and Improving Performance}, author={Chen, Lingjiao and Zaharia, Matei and Zou, James}, journal={Transactions on Machine Learning Research}, year={2024} }

@article{hu2024routerbench, title={RouterBench: A Benchmark for Multi-LLM Routing System}, author={Hu, Qitian Jason and Bieker, Jacob and Li, Xiuyu and Jiang, Nan and Keigwin, Benjamin and Ranganath, Gaurav and Keutzer, Kurt and Upadhyay, Shriyash Kaustubh}, journal={arXiv preprint arXiv:2403.12031}, year={2024} }

@inproceedings{ong2025routellm, title={RouteLLM: Learning to Route LLMs from Preference Data}, author={Ong, Isaac and Almahairi, Amjad and Wu, Vincent and Chiang, Wei-Lin and Wu, Tianhao and Gonzalez, Joseph E. and Kadous, M. Waleed and Stoica, Ion}, booktitle={International Conference on Learning Representations}, year={2025} }

@article{somerstep2025carrot, title={CARROT: A Cost Aware Rate Optimal Router}, author={Somerstep, Seamus and Polo, Felipe Maia and Melo de Oliveira, Allysson Flavio and Mangal, Prattyush and Silva, M{\'\i}rian and Bhardwaj, Onkar and Yurochkin, Mikhail and Maity, Subha}, journal={arXiv preprint arXiv:2502.03261}, year={2025} }

@article{dai2025personalizedrouter, title={PersonalizedRouter: Personalized LLM Routing via Graph-based User Preference Modeling}, author={Dai, Zhongjie and Feng, Tao and You, Jiaxuan}, journal={arXiv preprint arXiv:2511.16883}, year={2025} }

@inproceedings{li2010contextual,
  title={A contextual-bandit approach to personalized news article recommendation},
  author={Li, Lihong and Chu, Wei and Langford, John and Schapire, Robert E.},
  booktitle={Proceedings of the 19th International Conference on World Wide Web},
  pages={661--670},
  year={2010}
}

@inproceedings{xu2020timevarying,
  title={Contextual-Bandit Based Personalized Recommendation with Time-Varying User Interests},
  author={Xu, Xiao and Dong, Fang and Li, Yanghua and He, Shaojian and Li, Xin},
  booktitle={Proceedings of the AAAI Conference on Artificial Intelligence},
  volume={34},
  pages={6518--6525},
  year={2020}
}

@article{xie2025gmtrouter,
  title={GMTRouter: Personalized LLM Router over Multi-turn User Interactions},
  author={Xie, Encheng and Sun, Yihang and Feng, Tao and You, Jiaxuan},
  journal={arXiv preprint arXiv:2511.08590},
  year={2025}
}

@misc{zeng2026metarouter,
      title={Learning to Route LLMs from Implicit Cost-Performance Preferences via Meta-Learning}, 
      author={Jiahao Zeng and Ming Tang and Ningning Ding},
      year={2026},
      eprint={2606.06178},
      archivePrefix={arXiv},
      primaryClass={cs.LG},
      url={https://arxiv.org/abs/2606.06178}, 
}

@article{mahdavi2012trading,
  title={Trading Regret for Efficiency: Online Convex Optimization
         with Long Term Constraints},
  author={Mahdavi, Mehrdad and Jin, Rong and Yang, Tianbao},
  journal={Journal of Machine Learning Research},
  volume={13},
  number={81},
  pages={2503--2528},
  year={2012}
}

@inproceedings{cardoso2019large,
  title={Large Scale Markov Decision Processes with Changing Rewards},
  author={Cardoso, Adrian Rivera and Wang, He and Xu, Huan},
  booktitle={Advances in Neural Information Processing Systems},
  volume={32},
  year={2019}
}

@article{badanidiyuru2018bandits,
  title={Bandits with Knapsacks},
  author={Badanidiyuru, Ashwinkumar and Kleinberg, Robert
          and Slivkins, Aleksandrs},
  journal={Journal of the ACM},
  volume={65},
  number={3},
  pages={1--55},
  year={2018},
  doi={10.1145/3164539}
}

@inproceedings{balseiro2020dual,
  title={Dual Mirror Descent for Online Allocation Problems},
  author={Balseiro, Santiago and Lu, Haihao and Mirrokni, Vahab},
  booktitle={Proceedings of the 37th International Conference
             on Machine Learning},
  series={Proceedings of Machine Learning Research},
  volume={119},
  pages={613--628},
  year={2020},
  publisher={PMLR}
}

@article{budget,
title = {Constrained contextual bandit algorithm for limited-budget recommendation system},
journal = {Engineering Applications of Artificial Intelligence},
volume = {128},
pages = {107558},
year = {2024},
issn = {0952-1976},
doi = {https://doi.org/10.1016/j.engappai.2023.107558},
url = {https://www.sciencedirect.com/science/article/pii/S0952197623017426},
author = {Yafei Zhao and Long Yang}
}

@Article{Ma2026EfficientSS,
 author = {Qian Ma and Ziya Chen and Zihe Wang and Lin Gao and Xu Chen},
 booktitle = {IEEE Transactions on Networking},
 journal = {IEEE Transactions on Networking},
 pages = {471-485},
 title = {Efficient Service Selection and Pricing in Edge-Cloud Computing Markets},
 volume = {34},
 year = {2026}
}

@misc{vassos2023,
      title={A simulation framework of procurement operations in the container logistics industry}, 
      author={George Vassos and Klaus K. Holst and Pierre Pinson and Richard M. Lusby},
      year={2023},
      eprint={2303.12765},
      archivePrefix={arXiv},
      primaryClass={stat.AP},
      url={https://arxiv.org/abs/2303.12765}, 
}

@inproceedings{chen2024routerdc,
  title={RouterDC: Query-Based Router by Dual Contrastive Learning for Assembling Large Language Models},
  author={Chen, Shuhao and Jiang, Weisen and Lin, Baijiong and Kwok, James T. and Zhang, Yu},
  booktitle={Advances in Neural Information Processing Systems},
  volume={37},
  year={2024}
}

@article{wei2025barp,
  title={Learning to Route LLMs from Bandit Feedback: One Policy, Many Trade-offs},
  author={Wei, Wang and Yang, Tiankai and Chen, Hongjie and Zhao, Yue and Dernoncourt, Franck and Rossi, Ryan A. and Eldardiry, Hoda},
  journal={arXiv preprint arXiv:2510.07429},
  year={2025}
}

@inproceedings{uddin-bauer-2026-conformal,
  title = {Conformal {LLM} Routing with Distribution-Free Safety Guarantees},
  author = {Uddin, Iqtedar and Bauer, Andr{\'e}},
  booktitle = {Proceedings of the 64th Annual Meeting of the
               Association for Computational Linguistics
               (Volume 4: Student Research Workshop)},
  year = {2026},
  pages = {791--799},
  publisher = {Association for Computational Linguistics},
  address = {San Diego, California, United States},
  doi = {10.18653/v1/2026.acl-srw.70},
  url = {https://aclanthology.org/2026.acl-srw.70/}
}

@misc{woisetschlaeger2026costoptimalllmrouting,
  title = {Cost-Optimal {LLM} Routing with Limited User Feedback
           under User Satisfaction Guarantees},
  author = {Woisetschl{\"a}ger, Herbert and
            Mammadli, Arastun and
            Zhang, Ryan and
            Wang, Shiqiang},
  year = {2026},
  eprint = {2606.19376},
  archivePrefix = {arXiv},
  primaryClass = {cs.LG},
  url = {https://arxiv.org/abs/2606.19376}
}

@misc{woisetschläger2026costoptimalllmroutinglimited,
  title={Cost-Optimal LLM Routing with Limited User Feedback under User Satisfaction Guarantees},
  author={Herbert Woisetschläger and Arastun Mammadli and Ryan Zhang and Shiqiang Wang},
  year={2026},
  eprint={2606.19376},
  archivePrefix={arXiv},
  primaryClass={cs.LG}
}

@misc{wei2025learning,
  title={Learning to Route LLMs from Bandit Feedback: One Policy, Many Trade-offs},
  author={Wang Wei and Tiankai Yang and Hongjie Chen and Yue Zhao and Franck Dernoncourt and Ryan A. Rossi and Hoda Eldardiry},
  year={2025},
  eprint={2510.07429},
  archivePrefix={arXiv},
  primaryClass={cs.LG}
}

@article{yang2025data,
  title={Data-Driven Minimax Optimization with Expectation Constraints},
  author={Yang, Shuoguang and Li, Xudong and Lan, Guanghui},
  journal={Operations Research},
  volume={73},
  number={3},
  pages={1345--1365},
  year={2025},
  doi={10.1287/opre.2022.0110}
}

@misc{nvidia_nemotron_3_2025,
  title  = {NVIDIA Nemotron 3: Efficient and Open Intelligence},
  author = {{NVIDIA}},
  year   = {2025},
  url    = {https://arxiv.org/abs/2512.20856},
  note   = {White Paper}
}

@misc{deepseekai2026deepseekv4,
      title={DeepSeek-V4: Towards Highly Efficient Million-Token Context Intelligence},
      author={DeepSeek-AI},
      year={2026},
}

@misc{qwen36_35b_a3b,
    title = {{Qwen3.6-35B-A3B}: Agentic Coding Power, Now Open to All},
    url = {https://qwen.ai/blog?id=qwen3.6-35b-a3b},
    author = {{Qwen Team}},
    month = {April},
    year = {2026}
}

@misc{gemmateam2026gemma4,
      title={Gemma 4 Technical Report}, 
      author={Gemma Team},
      year={2026},
      eprint={2607.02770},
      archivePrefix={arXiv},
      primaryClass={cs.CL},
      url={https://arxiv.org/abs/2607.02770}, 
}

@misc{qwen38,
    title = {{Qwen3.8-Max}: A New Bar for Coding and Cowork},
    url = {https://qwen.ai/blog?id=qwen3.8},
    author = {{Qwen Team}},
    month = {August},
    year = {2026}
}

@misc{metamodels2026museglimmer30b,
  title        = {Muse-Glimmer-30B: An Open-Weight Dense Multimodal Causal Transformer},
  author       = {Meta Superintelligence Labs},
  year         = {2026},
  howpublished = {\url{https://huggingface.co/meta-models/Muse-Glimmer-30B}},
  note         = {Apache 2.0 License}
}
\bibliographystyle{icml2025}
\appendix
\section{Proofs}

\subsection{Necessary Expectation Condition}
\label{sec:appendix_necessary_expectation}

\begin{lemma}[Necessary Expectation Condition]
Suppose the stationary contextual oracle generates i.i.d. satisfaction
outcomes under the model described in Section~\ref{sec:problem_statement}.
In particular, prompts are i.i.d., the oracle randomizes independently
across rounds according to $\pi^*(\cdot\mid p_t)$, and satisfaction
randomness is conditionally independent across rounds.
Given $\epsilon\in(0,1)$, $\delta\in(0,1)$, and $T$, define
\[
q\equiv 1-\epsilon,
\qquad
m_T\equiv \lceil Tq\rceil.
\]
Let $\mu_{T,\delta}\in(0,1)$ denote the unique value satisfying
\begin{equation}
\mathbb{P}_{\mu_{T,\delta}}
\left(
S_T\geq m_T
\right)
=
1-\delta,
\label{eq:mu_threshold}
\end{equation}
where
\[
S_T\sim\operatorname{Binomial}(T,\mu_{T,\delta}).
\]
Define the maximum expectation margin
\begin{equation}
\rho^*_{\max}(T,\epsilon,\delta)
\equiv
\mu_{T,\delta}-(1-\epsilon).
\label{eq:rho_star_max}
\end{equation}
Then, for any
\[
0\leq \rho^*
\leq
\rho^*_{\max}(T,\epsilon,\delta),
\]
satisfying the chance constraint
\begin{equation}
\mathbb{P}\left(
\frac{1}{T}\sum_{t=1}^T Y_t
\geq 1-\epsilon
\right)
\geq 1-\delta
\end{equation}
necessarily requires
\begin{equation}
\mathbb{E}\left[p_{a_t^*,t}^*\right]
\geq
1-\epsilon+\rho^*.
\label{eq:necessary_expectation}
\end{equation}
In particular,
$\rho^*_{\max}(T,\epsilon,\delta)$ is the largest expectation
margin implied by the chance constraint.
\end{lemma}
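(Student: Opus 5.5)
Under the stationary oracle the per-round outcomes are i.i.d.\ Bernoulli, which reduces the statement to binomial monotonicity in the success probability. Prompts are i.i.d., the oracle's randomization $A_t\sim\pi^*(\cdot\mid p_t)$ is independent across rounds, and $Y_t$ given $(p_t,A_t)$ is Bernoulli$(p^*_{A_t,t})$ with conditionally independent noise. Hence the pairs $(p_t,A_t,Y_t)$ are i.i.d.\ across $t$. Marginally, each $Y_t$ is Bernoulli with the common mean
\[
\bar p := \mathbb{E}\bigl[p^*_{a_t^*,t}\bigr]
\]
by the tower property. Since the $Y_t$ are independent, $S_T=\sum_{t=1}^T Y_t\sim\operatorname{Binomial}(T,\bar p)$ exactly. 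The mixture over contexts collapses because each $Y_t$ is binary and the rounds are independent.

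\textbf{Step 1: rewrite the chance constraint.} Since $S_T$ is integer-valued, the constraint $\frac1T S_T\ge 1-\epsilon$ is equivalent to $S_T\ge m_T=\lceil Tq\rceil$. So the chance constraint reads
\[
g(\bar p):=\mathbb{P}_{\bar p}(S_T\ge m_T)\ge 1-\delta .
\]

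\textbf{Step 2: properties of $g$.} The map $g(\mu)=\mathbb{P}_\mu(\operatorname{Binomial}(T,\mu)\ge m)$ is continuous and strictly increasing on $[0,1]$ for $1\le m\le T$. Monotonicity follows from the coupling $Y_t=\mathbf 1\{U_t\le\mu\}$ with $U_t$ uniform. Strictness follows from the standard identity $g'(\mu)=T\binom{T-1}{m-1}\mu^{m-1}(1-\mu)^{T-m}>0$ on $(0,1)$. At the endpoints, $g(0)=0<1-\delta$ and $g(1)=1>1-\delta$. By the intermediate value theorem the threshold $\mu_{T,\delta}$ in \eqref{eq:mu_threshold} exists, and strict monotonicity makes it unique, which justifies the definition. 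If $m_T=0$ the condition is vacuous; this case does not arise for $\epsilon<1$ and $T\ge1$.

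\textbf{Step 3: conclude.} If the constraint holds, then $g(\bar p)\ge 1-\delta=g(\mu_{T,\delta})$. Strict monotonicity of $g$ then gives $\bar p\ge\mu_{T,\delta}=1-\epsilon+\rho^*_{\max}$. For any $0\le\rho^*\le\rho^*_{\max}$ this yields $\bar p\ge 1-\epsilon+\rho^*$, which is \eqref{eq:necessary_expectation}. The bound is tight: a policy whose mean is exactly $\mu_{T,\delta}$ meets the constraint with equality. So $\rho^*_{\max}$ is the largest margin the chance constraint implies, which gives the "in particular" claim.

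The main obstacle is Step 0: rigorously justifying that $S_T$ is exactly binomial with parameter $\bar p$. This requires care that the oracle's per-round satisfaction probability is a function of $(p_t,A_t)$ alone, with no dependence on the history, which holds because the policy is stationary and $\theta^*$ is fixed. Once that is in place, the remaining steps are elementary. The main-text claim $\rho^*_{\max}=\mathcal O(1/\sqrt T)$ is not part of this statement. It would follow afterwards from a normal approximation, $\mu_{T,\delta}\approx q+\Phi^{-1}(1-\delta)\sqrt{q(1-q)/T}$, or rigorously from Hoeffding/Chernoff bounds.
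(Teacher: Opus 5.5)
Your proposal is correct and follows essentially the same route as the paper: reduce to $S_T\sim\operatorname{Binomial}(T,\bar p)$ with $\bar p=\mathbb{E}[p^*_{a_t^*,t}]$, then use strict monotonicity of the binomial upper tail $\mathbb{P}_\mu(S_T\ge m_T)$ in $\mu$ to conclude $\bar p\ge\mu_{T,\delta}$. Your extra details fill in steps the paper simply asserts: the integer rounding $S_T/T\ge 1-\epsilon\iff S_T\ge m_T$, the derivative identity, and the intermediate-value argument for the existence and uniqueness of $\mu_{T,\delta}$.
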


\begin{proof}
Let
\[
\mu^*
\equiv
\mathbb{E}[p_{a_t^*,t}^*]
\]
and define
\[
q\equiv 1-\epsilon,
\qquad
m_T\equiv\lceil Tq\rceil.
\]

Under the stated assumptions, the random variables
$p_{a_t^*,t}^*$ are i.i.d., and
\[
Y_t\mid p_{a_t^*,t}^*
\sim
\operatorname{Bernoulli}(p_{a_t^*,t}^*).
\]
Consequently, the marginal outcomes are i.i.d. Bernoulli with mean
$\mu^*$, so that
\[
S_T\equiv\sum_{t=1}^T Y_t
\sim
\operatorname{Binomial}(T,\mu^*).
\]

The chance constraint can therefore be written as
\[
\mathbb{P}_{\mu^*}(S_T\geq m_T)
\geq
1-\delta.
\]

Let $\mu_{T,\delta}$ be defined by
\eqref{eq:mu_threshold}. The binomial upper-tail probability
\[
\mathbb{P}_{\mu}(S_T\geq m_T)
\]
is continuous and strictly increasing in $\mu\in(0,1)$.
Therefore, if
\[
\mu^*<\mu_{T,\delta},
\]
then
\[
\mathbb{P}_{\mu^*}(S_T\geq m_T)
<
\mathbb{P}_{\mu_{T,\delta}}(S_T\geq m_T)
=
1-\delta.
\]
Thus the chance constraint cannot hold when
$\mu^*<\mu_{T,\delta}$. Hence, satisfying the chance constraint
necessarily requires
\[
\mu^*
\geq
\mu_{T,\delta}.
\]

By the definition of $\rho^*_{\max}$,
\[
\mu_{T,\delta}
=
1-\epsilon+\rho^*_{\max}(T,\epsilon,\delta).
\]
Therefore,
\[
\mathbb{E}[p_{a_t^*,t}^*]
=
\mu^*
\geq
1-\epsilon+\rho^*_{\max}(T,\epsilon,\delta).
\]
Since
\[
\rho^*
\leq
\rho^*_{\max}(T,\epsilon,\delta),
\]
we obtain
\[
\mathbb{E}[p_{a_t^*,t}^*]
\geq
1-\epsilon+\rho^*.
\]
This proves the result.
\end{proof}

\begin{corollary}[Large-Horizon Approximation]
\label{cor:rho_star_asymptotic}
For fixed $\epsilon\in(0,1)$ and $\delta\in(0,1)$, as
$T\to\infty$, the maximum expectation margin defined in
Lemma~\ref{lem:necessary_expectation} satisfies
\begin{equation}
\rho^*_{\max}(T,\epsilon,\delta)
=
\frac{
\Phi^{-1}(1-\delta)
\sqrt{\epsilon(1-\epsilon)}
}{\sqrt{T}}
+
o\left(\frac{1}{\sqrt{T}}\right),
\label{eq:rho_star_asymptotic}
\end{equation}
where $\Phi^{-1}$ denotes the inverse cumulative distribution
function of the standard normal distribution. Thus, for sufficiently
large $T$,
\begin{equation}
\boxed{
\rho^*_{\max}(T,\epsilon,\delta)
\approx
\Phi^{-1}(1-\delta)
\sqrt{\frac{\epsilon(1-\epsilon)}{T}}.
}
\end{equation}
\end{corollary}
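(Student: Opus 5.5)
We need the asymptotics of $\mu_{T,\delta}$, defined by $\mathbb P_{\mu_{T,\delta}}(S_T\ge m_T)=1-\delta$ with $S_T\sim\operatorname{Binomial}(T,\mu_{T,\delta})$ and $m_T=\lceil T(1-\epsilon)\rceil$. The plan is to write $\mu_{T,\delta}=q+\rho_T$ with $q=1-\epsilon$, and compare the binomial tail to a normal tail using the Berry--Esseen theorem. The result then follows by inverting $\Phi$ and expanding.

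\textbf{Step 1: a priori bound $\rho_T\to 0$.} Fix $\gamma>0$. By Hoeffding, if $\mu\le q-\gamma$ then $\mathbb P_\mu(S_T\ge m_T)\le e^{-2T\gamma^2}<1-\delta$ for large $T$. If $\mu\ge q+\gamma$ then $\mathbb P_\mu(S_T\ge m_T)\ge 1-e^{-2T\gamma^2}>1-\delta$. Monotonicity of the tail in $\mu$ then forces $|\rho_T|<\gamma$ eventually. Consequently $\sigma_T^2:=\mu_{T,\delta}(1-\mu_{T,\delta})\to\epsilon(1-\epsilon)>0$, so the variance stays bounded away from zero.

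\textbf{Step 2: normal approximation.} Berry--Esseen applies to Bernoulli sums with third moment bounded by $\sigma_T^2$, giving a uniform error $C/(\sigma_T\sqrt T)$. Hence
\[
1-\delta=\mathbb P(S_T\ge m_T)=1-\Phi\!\left(\frac{m_T-T\mu_{T,\delta}}{\sigma_T\sqrt T}\right)+O(T^{-1/2}).
\]
Since $m_T=Tq+O(1)$, the argument of $\Phi$ is $z_T:=\frac{-\sqrt T\rho_T}{\sigma_T}+O(T^{-1/2})$, and $\Phi(z_T)=\delta+O(T^{-1/2})$.

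\textbf{Step 3: invert $\Phi$ and expand.} For fixed $\delta\in(0,1)$, $\Phi^{-1}$ is Lipschitz near $\delta$, so $z_T=\Phi^{-1}(\delta)+O(T^{-1/2})=-\Phi^{-1}(1-\delta)+O(T^{-1/2})$. This gives
\[
\sqrt T\,\rho_T=\sigma_T\,\Phi^{-1}(1-\delta)+O(T^{-1/2}).
\]
By Step 1, $\sigma_T=\sqrt{\epsilon(1-\epsilon)}+o(1)$. In fact, since $\rho_T=O(T^{-1/2})$, we get $\sigma_T=\sqrt{\epsilon(1-\epsilon)}+O(T^{-1/2})$. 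Dividing by $\sqrt T$ yields $\rho^*_{\max}=\rho_T=\Phi^{-1}(1-\delta)\sqrt{\epsilon(1-\epsilon)}/\sqrt T+o(T^{-1/2})$. The boxed approximation is the same statement with the remainder dropped.

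\textbf{Main obstacle.} The subtle point is the circularity in Step 2. The Berry--Esseen bound is evaluated at the parameter $\mu_{T,\delta}$, which itself depends on $T$, so a uniform version of the theorem is needed. This is harmless here because Berry--Esseen is uniform over any family with $\sigma$ bounded below, and Step 1 supplies exactly that lower bound. The rounding in $m_T$ only contributes $O(1)$ to $m_T-Tq$, hence $O(T^{-1/2})$ after normalization, and is absorbed into the error term.
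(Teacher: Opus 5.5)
Your proposal is correct and follows the same route as the paper: a normal approximation to the binomial upper tail at the threshold parameter $\mu_{T,\delta}$, followed by inverting $\Phi$ and using $\mu_{T,\delta}(1-\mu_{T,\delta})\to\epsilon(1-\epsilon)$. It is more careful than the paper's sketch, which simply asserts $\rho^*_{\max}\to 0$ and applies the CLT at a $T$-dependent parameter without comment; your Hoeffding step, the uniform Berry--Esseen bound with $\sigma_T$ bounded below, and your handling of the ceiling in $m_T$ close exactly those gaps (and even give an $O(1/T)$ remainder in place of $o(1/\sqrt{T})$).
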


\begin{proof}
By Lemma~\ref{lem:necessary_expectation},
$\rho^*_{\max}(T,\epsilon,\delta)$ is defined through the binomial
threshold $\mu_{T,\delta}$ satisfying
\[
\mathbb{P}_{\mu_{T,\delta}}
\left(
\frac{S_T}{T}\geq 1-\epsilon
\right)
=
1-\delta,
\]
where
$S_T\sim\operatorname{Binomial}(T,\mu_{T,\delta})$.
Write
\[
\mu_{T,\delta}
=
1-\epsilon+\rho^*_{\max}(T,\epsilon,\delta).
\]
For fixed $\epsilon$ and $\delta$, the central limit theorem gives
\[
\frac{S_T-T\mu_{T,\delta}}
{\sqrt{T\mu_{T,\delta}(1-\mu_{T,\delta})}}
\Rightarrow
N(0,1).
\]
Since $\rho^*_{\max}(T,\epsilon,\delta)\to0$ as
$T\to\infty$, we have
\[
\mu_{T,\delta}(1-\mu_{T,\delta})
\to
\epsilon(1-\epsilon).
\]
Consequently,
\[
\frac{
\rho^*_{\max}(T,\epsilon,\delta)\sqrt{T}
}{
\sqrt{\epsilon(1-\epsilon)}
}
\to
\Phi^{-1}(1-\delta),
\]
which yields
\eqref{eq:rho_star_asymptotic}.
\end{proof}

\subsection{Proof of Regret Bounds}
Before proving the regret bounds, we first list and prove the results needed for the main theorem.

\subsubsection{Proof of Lemma~\ref{lem:confidence}}
\label{sec:step1_ucb_bounds}
Define
\begin{equation}
\kappa
\equiv
\inf_{|z|\leq LS}\mu'(z)>0.
\end{equation}

For a given confidence parameter $\delta_{\mathcal E}$, define
\begin{equation*}
\alpha_t
=
\frac{1}{\kappa}
\left[
\frac{1}{2}
\sqrt{
d\log\left(
1+\frac{tL^2}{d\beta}
\right)
+
2\log\left(\frac{2}{\delta_{\mathcal E}}\right)
}
+
\sqrt{\beta}S
\right].
\label{eq:alpha_def}
\end{equation*}

Let $\mathcal E$ denote the event on which the corresponding confidence inequality holds uniformly over all rounds.

\begin{lemma}[Uniform GLM Confidence Bound]
Under Assumption~\ref{ass:boundedness}, with probability at least
$1-\delta_{\mathcal E}$, simultaneously for all $t\in[T]$ and
$a\in\mathcal A\setminus\{\tilde a\}$,
\begin{equation}
\|\widehat\theta_t-\theta^*\|_{V_t}\leq\alpha_t
\end{equation}
and
\begin{equation}
\left|
\mu(x_{a,t}^{\top}\widehat\theta_t)
-
\mu(x_{a,t}^{\top}\theta^*)
\right|
\leq
\alpha_t\|x_{a,t}\|_{V_t^{-1}}.
\end{equation}
\end{lemma}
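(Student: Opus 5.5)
We follow the standard constrained-MLE argument for logistic bandits (in the style of Filippi et al.\ and Faury et al.), combined with the self-normalized martingale bound of Abbasi-Yadkori et al. Throughout, only rounds with $a_\tau\neq\tilde a$ enter the estimator and the design matrix, so we index the sum by this subset.

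\textbf{Step 1: first-order optimality.} Define the regularized gradient map
\[
g_t(\theta)=\sum_{\tau<t,\,a_\tau\neq\tilde a}\mu(x_{a_\tau,\tau}^\top\theta)x_{a_\tau,\tau}+\beta\theta .
\]
Let $\varepsilon_\tau:=Y_\tau-\mu(x_{a_\tau,\tau}^\top\theta^*)$. Write the first-order optimality condition of the convex problem \eqref{eq:exact_mle} over the ball $\mathcal B_2(S)$. Since $\theta^*\in\mathcal B_2(S)$, the variational inequality at $\widehat\theta_t$, tested against $\theta^*$, gives
\[
\langle g_t(\widehat\theta_t)-\textstyle\sum Y_\tau x_{a_\tau,\tau},\,\theta^*-\widehat\theta_t\rangle\ge0 .
\]

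\textbf{Step 2: mean value theorem.} Apply the mean value theorem to $g_t(\widehat\theta_t)-g_t(\theta^*)$. This gives
\[
g_t(\widehat\theta_t)-g_t(\theta^*)=G_t(\widehat\theta_t-\theta^*),
\qquad
G_t=\sum_\tau\alpha_\tau x x^\top+\beta I,
\]
where $\alpha_\tau=\int_0^1\mu'(\cdot)$ is evaluated along the segment between $\widehat\theta_t$ and $\theta^*$. Both endpoints lie in $\mathcal B_2(S)$, so every point of the segment does too. Hence $|x^\top\theta|\le LS$ along the segment and $\alpha_\tau\ge\kappa$. Since $\kappa\le 1/4\le1$, we get $G_t\succeq\kappa V_t$.

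Combining with Step 1 yields
\[
\|\widehat\theta_t-\theta^*\|_{G_t}^2\le\langle \textstyle\sum_\tau\varepsilon_\tau x_{a_\tau,\tau}-\beta\theta^*,\,\widehat\theta_t-\theta^*\rangle .
\]
By Cauchy--Schwarz in the $V_t^{-1}$/$V_t$ pairing,
\[
\kappa\|\widehat\theta_t-\theta^*\|_{V_t}^2\le\bigl(\|\textstyle\sum\varepsilon_\tau x\|_{V_t^{-1}}+\sqrt\beta S\bigr)\|\widehat\theta_t-\theta^*\|_{V_t},
\]
using $\|\beta\theta^*\|_{V_t^{-1}}\le\sqrt\beta S$ since $V_t\succeq\beta I$.

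\textbf{Step 3: martingale concentration (main obstacle).} We need to bound $\|\sum\varepsilon_\tau x\|_{V_t^{-1}}$ uniformly in $t$. The $\varepsilon_\tau$ are martingale differences with respect to $\mathcal G_\tau$, and each is bounded in an interval of length $1$, hence $1/2$-sub-Gaussian. Apply the self-normalized bound of Abbasi-Yadkori et al.\ (Theorem 1) with $R=1/2$ and regularizer $\beta I$. With probability at least $1-\delta_{\mathcal E}$ (we absorb the factor 2 appearing in $\log(2/\delta_{\mathcal E})$, which only loosens the bound), for all $t$ simultaneously,
\[
\|\textstyle\sum\varepsilon_\tau x\|_{V_t^{-1}}\le\tfrac12\sqrt{2\log(\det V_t^{1/2}\det(\beta I)^{-1/2}/\delta_{\mathcal E})} .
\]
The determinant–trace inequality gives $\log\det V_t-d\log\beta\le d\log(1+tL^2/(d\beta))$. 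This reproduces the bracket in $\alpha_t$, and dividing by $\kappa$ gives \eqref{eq:parameter_confidence_bound}.

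The delicate points in this step are twofold. First, the filtration must be chosen so that the selection $a_\tau$, which depends on $\lambda_\tau$ and on past $Y$'s, is predictable. Second, skipping fallback rounds must be handled cleanly; this is done by treating those rounds as having zero feature vector, so that $\varepsilon_\tau x=0$ and $V$ is unchanged.

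\textbf{Step 4: link error.} For \eqref{eq:link_error_bound}, note that $\mu$ is $1/4$-Lipschitz. Then
\[
|\mu(x^\top\widehat\theta_t)-\mu(x^\top\theta^*)|\le\tfrac14|x^\top(\widehat\theta_t-\theta^*)|\le\tfrac14\|x\|_{V_t^{-1}}\alpha_t\le\alpha_t\|x\|_{V_t^{-1}} .
\]
This holds on the same event, for all $a\neq\tilde a$ and all $t$.
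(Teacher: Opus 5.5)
Your proposal is correct and follows essentially the same route as the paper's proof. Both use the variational inequality for the ball-constrained MLE tested at $\theta^*$, the integral form of the mean value theorem giving an averaged Hessian $\succeq\kappa V_t$ (via $\kappa\le 1$), Cauchy--Schwarz with $\|\beta\theta^*\|_{V_t^{-1}}\le\sqrt{\beta}S$, the self-normalized martingale bound with $R=1/2$ plus the determinant--trace inequality, and Lipschitzness of $\mu$ for the link error. Your explicit treatment of the filtration and of fallback rounds via zero features is a small addition the paper leaves implicit. On those rounds you should also redefine the noise, for example as $Y_\tau-\nu$ or $0$, because $Y_\tau-\mu(x_{\tilde a,\tau}^\top\theta^*)$ need not be centered when $\tilde a$ is exempt from the GLM.
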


\begin{proof}
Define the regularized negative log-likelihood at round $t$ as
\begin{equation}
\mathcal L_t(\theta)
=
\sum_{\substack{\tau=1\\a_\tau \neq \tilde{a}}}^{t-1}
l\left(
Y_\tau,
\mu(x_{a_\tau,\tau}^\top\theta)
\right)
+
\frac{\beta}{2}\|\theta\|_2^2.
\end{equation}

The estimator $\widehat\theta_t$ is obtained via constrained minimization over the ball $\mathcal B_2(S) = \{\theta \in \mathbb R^d : \|\theta\|_2 \le S\}$:
\begin{equation}
\widehat\theta_t = \arg\min_{\theta \in \mathcal B_2(S)} \mathcal L_t(\theta).
\end{equation}

Since $\mathcal L_t$ is convex and $\mathcal B_2(S)$ is a closed convex set, $\widehat\theta_t$ satisfies the first-order optimality condition:
\begin{equation}
\langle \nabla\mathcal L_t(\widehat\theta_t), \theta - \widehat\theta_t \rangle \geq 0 \quad \forall \theta \in \mathcal B_2(S).
\end{equation}

Since $\theta^* \in \mathcal B_2(S)$ by Assumption~\ref{ass:boundedness}, setting $\theta = \theta^*$ yields:
\begin{equation}
\langle \nabla\mathcal L_t(\widehat\theta_t), \widehat\theta_t - \theta^* \rangle \leq 0.
\label{eq:variational_inequality}
\end{equation}

Using the fundamental theorem of calculus along the line segment connecting $\theta^*$ and $\widehat\theta_t$, we obtain:
\begin{equation}
\nabla\mathcal L_t(\widehat\theta_t)
-
\nabla\mathcal L_t(\theta^*)
=
\bar{H}_t
(\widehat\theta_t-\theta^*),
\end{equation}
where
\begin{equation}
\bar{H}_t
=
\int_0^1
\nabla^2 \mathcal L_t\left(
\theta^*
+
s\left(\widehat{\theta}_t-\theta^*\right)
\right)
\,ds.
\end{equation}

Taking the inner product with $(\widehat\theta_t - \theta^*)$ on both sides:
\begin{align}
\|\widehat\theta_t - \theta^*\|_{\bar{H}_t}^2 
&= \langle \nabla\mathcal L_t(\widehat\theta_t) - \nabla\mathcal L_t(\theta^*), \widehat\theta_t - \theta^* \rangle \cr
&= \langle \nabla\mathcal L_t(\widehat\theta_t), \widehat\theta_t - \theta^* \rangle - \langle \nabla\mathcal L_t(\theta^*), \widehat\theta_t - \theta^* \rangle \cr
&\leq -\langle \nabla\mathcal L_t(\theta^*), \widehat\theta_t - \theta^* \rangle,
\label{eq:hessian_inner_bound}
\end{align}
where the final step applies the variational inequality \eqref{eq:variational_inequality}.

Evaluating the gradient at $\theta^*$ gives:
\begin{equation}
-\nabla\mathcal L_t(\theta^*)
=
\sum_{\substack{\tau=1\\a_\tau \neq \tilde{a}}}^{t-1}
\eta_\tau x_{a_\tau,\tau}
-
\beta\theta^*,
\end{equation}
where $\eta_\tau = Y_\tau - \mu(x_{a_\tau,\tau}^\top\theta^*)$ is a conditionally zero-mean martingale difference.

The Hessian at any point $\theta$ is:
\begin{equation}
\nabla^2\mathcal L_t(\theta)
=
\sum_{\substack{\tau=1\\a_\tau \neq \tilde{a}}}^{t-1}
\mu'(x_{a_\tau,\tau}^\top\theta)
x_{a_\tau,\tau} x_{a_\tau,\tau}^\top
+
\beta I_d.
\end{equation}

Because $\theta^*, \widehat\theta_t \in \mathcal B_2(S)$ and $\mathcal B_2(S)$ is convex, any point on the line segment $\theta(s) = \theta^* + s(\widehat\theta_t - \theta^*)$ for $s \in [0,1]$ satisfies $\|\theta(s)\|_2 \le S$. Under Assumption~\ref{ass:boundedness}, $\|x_{a_\tau,\tau}\|_2 \le L$, which guarantees:
\begin{equation}
|x_{a_\tau,\tau}^\top \theta(s)| \le L S \quad \forall s \in [0, 1].
\end{equation}

Consequently, $\mu'(x_{a_\tau,\tau}^\top \theta(s)) \ge \kappa \equiv \inf_{|z|\le LS} \mu'(z) > 0$. Therefore, using the fact that $\kappa \le 1$:
\begin{align}
\bar{H}_t 
&\succeq
\kappa \sum_{\substack{\tau=1\\a_\tau \neq \tilde{a}}}^{t-1} x_{a_\tau,\tau} x_{a_\tau,\tau}^\top + \beta I_d \cr
&\succeq
\kappa \left( \beta I_d + \sum_{\substack{\tau=1\\a_\tau \neq \tilde{a}}}^{t-1} x_{a_\tau,\tau} x_{a_\tau,\tau}^\top \right)
= \kappa V_t.
\end{align}

Applying $\bar{H}_t \succeq \kappa V_t$ to \eqref{eq:hessian_inner_bound} along with the Cauchy-Schwarz inequality in the dual norms gives:
\begin{align}
&\kappa \|\widehat\theta_t - \theta^*\|_{V_t}^2
\leq \|\widehat\theta_t - \theta^*\|_{\bar{H}_t}^2 \cr
&\leq \left\langle \sum_{\substack{\tau=1\\a_\tau \neq \tilde{a}}}^{t-1}\eta_\tau x_{a_\tau,\tau} - \beta\theta^*, \, \widehat\theta_t - \theta^* \right\rangle \cr
&\leq \left\| \sum_{\substack{\tau=1\\a_\tau \neq \tilde{a}}}^{t-1}\eta_\tau x_{a_\tau,\tau} - \beta\theta^* \right\|_{V_t^{-1}} \|\widehat\theta_t - \theta^*\|_{V_t}.
\end{align}

Dividing both sides by $\kappa \|\widehat\theta_t - \theta^*\|_{V_t}$ yields:
\begin{align*}
\|\widehat\theta_t-\theta^*\|_{V_{t}}
&\leq
\frac{1}{\kappa}
\left\|
\sum_{\substack{\tau=1\\a_\tau \neq \tilde{a}}}^{t-1}\eta_\tau x_{a_\tau,\tau}
-
\beta\theta^*
\right\|_{V_{t}^{-1}}\cr
&\leq
\frac{1}{\kappa}
\left(
\left\|
\sum_{\substack{\tau=1\\a_\tau \neq \tilde{a}}}^{t-1}\eta_\tau x_{a_\tau,\tau}
\right\|_{V_{t}^{-1}}
+
\sqrt{\beta}S
\right).
\end{align*}

Since $Y_\tau \in \{0, 1\}$, the noise $\eta_\tau$ is conditionally $1/2$-sub-Gaussian. Applying the self-normalized vector martingale inequality, with probability at least $1 - \delta_{\mathcal E}/2$, uniformly for all $t$:
\begin{align}
&\left\|
\sum_{\substack{\tau=1\\a_\tau \neq \tilde{a}}}^{t-1}\eta_\tau x_{a_\tau,\tau}
\right\|_{V_{t}^{-1}} \cr
&\leq
\frac12
\sqrt{
2\log\left(\frac{2}{\delta_{\mathcal E}}\right)
+
d\log
\left(
1+\frac{(t-1)L^2}{\beta d}
\right)
}.
\end{align}

Combining these results establishes that $\|\widehat\theta_t - \theta^*\|_{V_t} \leq \alpha_t$.

Finally, by the Mean Value Theorem on the link function $\mu(\cdot)$ and Cauchy-Schwarz:
\begin{align}
\left| \mu(x_{a,t}^\top \widehat\theta_t) - \mu(x_{a,t}^\top \theta^*) \right| 
&\leq \sup_{|z|\le LS} \mu'(z) \cdot \left| x_{a,t}^\top (\widehat\theta_t - \theta^*) \right| \cr
&\leq L_\mu \|x_{a,t}\|_{V_t^{-1}} \|\widehat\theta_t - \theta^*\|_{V_t} \cr
&\leq \alpha_t \|x_{a,t}\|_{V_t^{-1}},
\end{align}
where $L_\mu = \sup_{|z|\le LS} \mu'(z) \le 1$, completing the proof.
\end{proof}
Next, using this result, we bound the UCB terms.
\subsection{UCB Bounds}
\begin{lemma}[UCB Bounds]
On the event $\mathcal E$, for every $a\in\mathcal A$:

\begin{enumerate}
\item The prediction error satisfies
\begin{equation}
\left|
\mu(x_{a,t}^{\top}\widehat\theta_t)
-
p_{a,t}^*
\right|
\leq
\alpha_t\|x_{a,t}\|_{V_{t}^{-1}}.
\end{equation}

\item The UCB is optimistic:
\begin{equation}
    \operatorname{UCB}_t(a)
    \geq
    p_{a,t}^*.
\end{equation}

\item The optimism gap is bounded by
\begin{equation}
    \operatorname{UCB}_t(a)-p_{a,t}^*
    \leq
    2\alpha_t\|x_{a,t}\|_{V_{t}^{-1}}.
\end{equation}

\end{enumerate}
\end{lemma}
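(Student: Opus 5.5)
The plan is to work on the event $\mathcal E$ of Lemma~\ref{lem:confidence} and to handle the non-fallback actions $a\neq\tilde a$ and the fallback $\tilde a$ separately.

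\textbf{Fallback action.} For $a=\tilde a$, $\operatorname{UCB}_t(\tilde a)=\nu=p^*_{\tilde a,t}$ by Assumption~\ref{ass:slater}. Hence all three claims hold trivially: the prediction error and the optimism gap are $0$, and optimism holds with equality. We interpret the prediction $\mu(x_{\tilde a,t}^\top\widehat\theta_t)$ as replaced by the known value $\nu$, consistent with the paper's convention that the fallback is handled by its known satisfaction probability.

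\textbf{Prediction error.} For $a\neq\tilde a$, item~1 is exactly the link-error bound \eqref{eq:link_error_bound} of Lemma~\ref{lem:confidence}, since $p^*_{a,t}=\mu(x_{a,t}^\top\theta^*)$. No further work is needed beyond citing it on $\mathcal E$.

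\textbf{Optimism.} I would split into two cases according to which term attains the minimum in \eqref{eq:exact_ucb}.
\begin{itemize}
\item If $\operatorname{UCB}_t(a)=1$, then $p^*_{a,t}\le 1=\operatorname{UCB}_t(a)$, because $\mu$ takes values in $(0,1)$.
\item Otherwise, $\operatorname{UCB}_t(a)=\mu(x_{a,t}^\top\widehat\theta_t)+\alpha_t\|x_{a,t}\|_{V_t^{-1}}$. Item~1 gives $p^*_{a,t}\le\mu(x_{a,t}^\top\widehat\theta_t)+\alpha_t\|x_{a,t}\|_{V_t^{-1}}$, which is the claim.
\end{itemize}

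\textbf{Optimism gap.} For item~3, I would use that the minimum with $1$ can only lower the value:
\[
\operatorname{UCB}_t(a)\le \mu(x_{a,t}^\top\widehat\theta_t)+\alpha_t\|x_{a,t}\|_{V_t^{-1}}.
\]
Subtracting $p^*_{a,t}$ and applying item~1 to the difference $\mu(x_{a,t}^\top\widehat\theta_t)-p^*_{a,t}$ gives the bound $2\alpha_t\|x_{a,t}\|_{V_t^{-1}}$.

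\textbf{Main obstacle.} There is no real obstacle; all probabilistic content sits in Lemma~\ref{lem:confidence}. The only points needing care are the clipping at $1$, which the case split handles, and the fallback action, which lies outside the GLM and must be treated via $\nu$ rather than the confidence bound.
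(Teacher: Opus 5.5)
Your proposal is correct and follows essentially the same argument as the paper: item~1 is the link-error bound on $\mathcal E$ (the paper re-derives it from $\|\widehat\theta_t-\theta^*\|_{V_t}\le\alpha_t$ via the mean-value theorem and Cauchy--Schwarz rather than citing it), optimism holds because clipping at $1$ cannot fall below $p^*_{a,t}\le 1$, and the gap follows from $\operatorname{UCB}_t(a)\le\mu(x_{a,t}^{\top}\widehat\theta_t)+\alpha_t\|x_{a,t}\|_{V_t^{-1}}$ together with item~1. Your separate treatment of $\tilde a$ is a sensible addition that the paper's proof omits, but item~1 for $\tilde a$ holds only under your reinterpretation, since $\mu(x_{\tilde a,t}^{\top}\widehat\theta_t)$ need not be close to $\nu$; this is why the main-text version restricts to $a\neq\tilde a$.
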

\begin{proof}
Under the confidence event,

$$
\|\widehat\theta_t-\theta^*\|_{V_{t}}
\leq\alpha_t.
$$

By the mean-value theorem and the fact that $\mu'(z)\leq1$,
\begin{align}
\left|
\mu(x_{a,t}^{\top}\widehat\theta_t)
-
\mu(x_{a,t}^{\top}\theta^*)
\right|
&\leq
\left|
x_{a,t}^{\top}
(\widehat\theta_t-\theta^*)
\right|\cr
&\leq
\|x_{a,t}\|_{V_{t}^{-1}}
\|\widehat\theta_t-\theta^*\|_{V_{t}}\cr
&\leq
\alpha_t
\|x_{a,t}\|_{V_{t}^{-1}}.
\end{align}

This proves the prediction error bound.

For optimism, the preceding inequality gives

$$
\mu(x_{a,t}^{\top}\widehat\theta_t)
+
\alpha_t\|x_{a,t}\|_{V_{t}^{-1}}
\geq
p_{a,t}^*.
$$

Since $p_{a,t}^*\leq1$, taking the minimum with $1$ preserves the inequality, and hence

$$
\operatorname{UCB}_t(a)\geq p_{a,t}^*.
$$

Finally,
\begin{align*}
\operatorname{UCB}_t(a)-p^*_{a,t}
&\leq
\left|
\mu(x_{a,t}^{\top}\widehat\theta_t)
-
p_{a,t}^*
\right|
+
\alpha_t\|x_{a,t}\|_{V_{t}^{-1}}\cr
&\leq
2\alpha_t\|x_{a,t}\|_{V_{t}^{-1}}.
\end{align*}
\end{proof}

\subsubsection{Bounding the dual variable}
\begin{lemma}[Uniform Boundedness of the Dual Variable]
Suppose Assumptions~\ref{ass:boundedness} and~\ref{ass:slater} hold.
For any $\delta\in(0,1)$, there exist
$B_\lambda=\frac{6}{(1-\xi)\epsilon}+\widetilde{\mathcal O}(1/\sqrt T)$ such that for a sufficiently large $T > T_0$,
with probability at least $1-\delta$,
\begin{equation}
\lambda_t
\leq
B_{\lambda},
\qquad
\forall t\in[T].
\end{equation}
\label{eq:dual_bound_statement}
\end{lemma}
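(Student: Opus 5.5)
We will use a drift argument for the virtual queue, driven by the fallback action $\tilde a$. Throughout, write $\bar q:=1-\epsilon+\rho$ for the buffered target.

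\emph{Step 1: negative drift once $\lambda_t$ is large.} Since $\operatorname{UCB}_t(\tilde a)=\nu$ and all costs lie in $[0,1]$, the selection rule \eqref{eq:exact_primal} gives
\[
c(a_t)-\lambda_t\operatorname{UCB}_t(a_t)\le c(\tilde a)-\lambda_t\nu .
\]
Hence $\operatorname{UCB}_t(a_t)\ge \nu-1/\lambda_t$. This controls the optimistic estimate, not the true probability $p^*_{a_t,t}$, so we must pay for the optimism gap. On the event $\mathcal E$ of Lemma~\ref{lem:confidence}, Lemma~\ref{lem:ucb_bounds} gives
\[
p^*_{a_t,t}\ge \nu-1/\lambda_t-2\alpha_t\|x_{a_t,t}\|_{V_t^{-1}},
\]
where the last term is zero when $a_t=\tilde a$. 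The conditional drift is therefore
\[
\mathbb E[\lambda_{t+1}-\lambda_t\mid\mathcal F_{t-1}]\le \eta\bigl(\bar q-\nu+1/\lambda_t+2\alpha_t\|x_{a_t,t}\|_{V_t^{-1}}\bigr).
\]
Using $\nu=1-\xi\epsilon$, we get $\bar q-\nu=-(1-\xi)\epsilon+\rho$. So whenever $\lambda_t\ge 2/((1-\xi)\epsilon)$ and $\rho\le(1-\xi)\epsilon/4$ (true for $T\ge T_0$), the deterministic part of the drift is at most $-\eta(1-\xi)\epsilon/4$. The remaining part, $2\eta\alpha_t\|x_{a_t,t}\|_{V_t^{-1}}$, is the learning error.

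\emph{Step 2: pathwise bound via excursions.} Decompose the update as
\[
\lambda_{t+1}\le\max\{0,\lambda_t+\eta(\bar q-p^*_{a_t,t})+\eta M_t\},
\]
where $M_t:=p^*_{a_t,t}-Y_t$ is a bounded martingale difference. Fix $t$ and let $s<t$ be the last time $\lambda_s<\Lambda_0:=2/((1-\xi)\epsilon)$, if any such time exists. Then $\lambda_{s+1}\le\Lambda_0+\eta$, and on $(s,t]$ the queue stays above $\Lambda_0$, so the max is inactive and the increments telescope:
\[
\lambda_{t}\le \Lambda_0+\eta-\tfrac{\eta(1-\xi)\epsilon}{4}(t-s)+2\eta\sum_{\tau=s}^{t-1}\alpha_\tau\|x_{a_\tau,\tau}\|_{V_\tau^{-1}}+\eta\sum_{\tau=s}^{t-1}M_\tau .
\]

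\emph{Step 3: the two error sums.} For the martingale sum, apply Azuma--Hoeffding uniformly over all pairs $(s,t)$ (a union bound over $T^2$ pairs at level $\delta/(2T^2)$). This gives $|\sum_{\tau=s}^{t-1}M_\tau|\le\sqrt{2(t-s)\log(4T^2/\delta)}$. Multiplied by $\eta=C_\eta/\sqrt T$, this is at most a constant times $C_\eta\sqrt{\log(4/\delta)}\approx 1/((1-\xi)\epsilon)$ plus polylog corrections. It can also be absorbed against the negative linear drift by maximizing $-a u+b\sqrt u$ over $u=t-s$, which yields a contribution $b^2/(4a)$ of order $O(1/((1-\xi)\epsilon))$. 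This is where the constant $6$ in $B_\lambda$ comes from: a budget of roughly $2+4$.

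For the learning sum, use the elliptical potential lemma,
\[
\sum_{\tau}\|x_{a_\tau,\tau}\|_{V_\tau^{-1}}\le\sqrt{2Td\log(1+TL^2/(d\beta))}
\]
(valid since $\beta\ge L^2$), together with $\alpha_\tau\le\alpha_T=\widetilde O(\sqrt d/\kappa)$. After multiplying by $\eta=C_\eta/\sqrt T$ this is $\widetilde O(d/\kappa)\cdot C_\eta$, which is not small. This is the main obstacle. We will instead absorb it into the drift: split the learning sum as $\min\{\tfrac{(1-\xi)\epsilon}{8},2\alpha_\tau\|x\|\}$ plus the excess. Rounds where the excess is positive have $\|x\|_{V_\tau^{-1}}\gtrsim(1-\xi)\epsilon/\alpha_T$. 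By the elliptical potential count, the number of such rounds is $\widetilde O(d\alpha_T^2/((1-\xi)\epsilon)^2)$, which is $T$-independent up to logs, and each round contributes at most $\eta$. So this piece adds only $\widetilde O(1/\sqrt T)$ to $\lambda$, which is the $\widetilde O(1/\sqrt T)$ term in $B_\lambda$.

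\emph{Step 4: conclusion.} Off the event $\mathcal E$, which has probability at most $\delta_{\mathcal E}=T^{-\zeta}\le\delta/2$ for $T\ge T_0$, we make no claim. Combining Steps 1--3 on $\mathcal E\cap\{\text{Azuma holds}\}$ gives $\max_t\lambda_t\le B_\lambda$ with probability at least $1-\delta$.
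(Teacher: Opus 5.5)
Your proof is correct provided you commit to the second option in Step 3, where the noise is absorbed into the drift. That choice makes it a genuinely different treatment of the martingale term from the paper's, although the rest of your skeleton matches the paper:
\begin{itemize}
\item the fallback comparison giving $\operatorname{UCB}_t(a_t)\ge\nu-1/\lambda_t$;
\item an excursion above a threshold that is at most $4/((1-\xi)\epsilon)$ for large $T$, on which the max is inactive;
\item an elliptical-potential count of exceptional rounds contributing $\eta N_T=\widetilde{\mathcal O}(1/\sqrt T)$.
\end{itemize}
The paper, however, uses only that the drift is \emph{non-positive} on non-exceptional rounds. It controls the noise with a single maximal Azuma event on the prefix sums $S_n$, so every interval sum is at most $2\sqrt{2T\log(2/\delta')}$ with no union bound. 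Multiplying by $\eta=C_\eta/\sqrt T$ and using the specific $C_\eta$ turns this into $2/((1-\xi)\epsilon)$, which is where $6=4+2$ comes from.

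Your route instead spends the strictly negative drift, of order $-\eta(1-\xi)\epsilon$ per non-exceptional round, against the fluctuation $\eta\sqrt{2u\log(4T^2/\delta)}$. The maximum over $u$ of $-au+b\sqrt u$ is $b^2/(4a)=O\bigl(\eta\log(T/\delta)/((1-\xi)\epsilon)\bigr)=\widetilde{\mathcal O}(1/\sqrt T)$, not merely $O(1/((1-\xi)\epsilon))$ as you wrote. So you actually obtain $\max_t\lambda_t\le 2/((1-\xi)\epsilon)+\widetilde{\mathcal O}(1/\sqrt T)$ for any constant $C_\eta$, which implies the lemma with room to spare. The paper's argument is cruder but needs neither a drift margin in the noise step nor a union bound. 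Yours is sharper, shows that the noise contributes only a vanishing term, and decouples the queue bound from the tuning of $C_\eta$.

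A few things to clean up.
\begin{itemize}
\item \textbf{First option in Step 3.} It does not work as stated. With a union bound over $T^2$ windows, $\eta\sqrt{2(t-s)\log(4T^2/\delta)}\le C_\eta\sqrt{2\log(4/\delta)+4\log T}$, which grows like $\sqrt{\log T}$ rather than being a vanishing correction. So it cannot give $B_\lambda$ of the stated form; the paper avoids exactly this $\log T$ by using the maximal inequality on prefix sums.
\item \textbf{The constant.} For the same reason, your account of the constant as $2+4$ is not what your argument produces. In your route the $6$ is just slack.
\item \textbf{Off-by-one in Step 2.} Round $s$ has $\lambda_s<\Lambda_0$, so the drift bound fails there. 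Telescope from $\lambda_{s+1}\le\Lambda_0+\eta$ over $\tau=s+1,\dots,t-1$ only.
\item \textbf{Exceptional rounds.} Say explicitly that you use the crude bound $1-\epsilon+\rho-p^*_{a_\tau,\tau}\le 1$, or equivalently that the optimism gap is at most $1$ because the UCB is clipped. This forfeits the negative drift on at most $N_T$ rounds and costs only $O(\eta N_T)$.
\end{itemize}
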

\label{sec:step2_regret_proof}
\begin{proof}
We first establish a high-probability bound on the dual variable.
Define
\[
g_t \equiv 1-\epsilon+\rho-Y_t,
\]
and let

We condition on the pre-outcome information $\mathcal G_t$ defined in Section~\ref{sec:model}:
\[
\mu_t=\mathbb E[g_t\mid\mathcal G_t],\qquad X_t=g_t-\mu_t,\qquad \mathbb E[X_t\mid\mathcal G_t]=0.
\]
Let $\mathcal F_t:=\sigma(\mathcal G_t,Y_t)$. Since $\mathcal F_{t-1}\subseteq\mathcal G_t$, we have $\mathbb E[X_t\mid\mathcal F_{t-1}]=0$, so $(X_t,\mathcal F_t)$ is a martingale-difference sequence.
Since $Y_t \in \{0,1\}$,
\[
|X_t|\leq 1.
\]

On the confidence event, Assumption~\ref{ass:slater} implies
\[
\operatorname{UCB}_t(\tilde a)
\geq
p_{\tilde a,t}^*
= \nu
\]
By the primal minimization rule,
\[
c(a_t)-\lambda_t\operatorname{UCB}_t(a_t)
\leq
c(\tilde a)-\lambda_t\operatorname{UCB}_t(\tilde a).
\]
Consequently,
\[
\operatorname{UCB}_t(a_t)
\geq
\operatorname{UCB}_t(\tilde a)
-
\frac{C_{\max}}{\lambda_t},
\]
where
\[
C_{\max}
\equiv
\max_{a,a'\in\mathcal A}|c(a)-c(a')| \leq 1.
\]
Therefore,
\[
\operatorname{UCB}_t(a_t)
\geq
\nu
-
\frac{C_{\max}}{\lambda_t}.
\]

Using the confidence bound,
\[
p_{a_t,t}^*
\geq
\operatorname{UCB}_t(a_t)
-
2\alpha_t
\|x_{a_t,t}\|_{V_{t}^{-1}},
\]
and hence
\begin{align}
\mu_t
&=
1-\epsilon+\rho-p_{a_t,t}^*\\
&\leq
\frac{C_{\max}}{\lambda_t}
+
2\alpha_t\|x_{a_t,t}\|_{V_{t}^{-1}}
+\rho-\epsilon + (1-\nu)
\end{align}
Now, we consider the case where $\rho + (1-\nu) \leq c_1\epsilon$ where $1 > c_1 \geq \xi$. Therefore, we choose $c_1 = \xi + \rho/\epsilon$. Also, note that $\rho = \frac{C_\rho}{\sqrt{T}}$ and $\xi < 1$. Therefore, for sufficiently large $T$, $c_1 < 1.$
\begin{equation}
    \mu_t \leq
\frac{C_{\max}}{\lambda_t}
+
2\alpha_t\|x_{a_t,t}\|_{V_{t}^{-1}}
-(1-c_1)\epsilon
\end{equation}
In particular, whenever
\[
\lambda_t\geq \frac{C_{\max}}{\epsilon c_2},
\]
for $0 < c_2 < 1 - c_1$
we have
\begin{equation}
\mu_t
\leq
2\alpha_t\|x_{a_t,t}\|_{V_{t}^{-1}}
-(1-c_1-c_2)\epsilon.
\label{eq:dual_drift}
\end{equation}

Define an exceptional round to be a round satisfying
\[
2\alpha_t\|x_{a_t,t}\|_{V_{t}^{-1}}
>
(1-c_1-c_2)\epsilon
\]
On every non-exceptional round for which we have
$\lambda_t\geq C_{\max}/(c_2 \epsilon)$, equation~\eqref{eq:dual_drift} gives
\[
\mu_t\leq 0.
\]

Let
\[
\alpha_T\equiv\max_{1\leq t\leq T}\alpha_t.
\]
Every exceptional round satisfies
\[
\|x_{a_t,t}\|_{V_{t}^{-1}}
>
\frac{(1-c_1-c_2)\epsilon}{2\alpha_T}.
\]
By the elliptical potential count lemma,
the number $N_T$ of exceptional rounds satisfies
\begin{align}
N_T
&\leq
\frac{2d(2\alpha_T)^2}{(1-c_1-c_2)^2\epsilon^2}
\log\frac{\det(V_T)}{\det(V_1)}\\
&\leq
\frac{8d\alpha_T^2}{(1-c_1-c_2)^2\epsilon^2}
\log\left(
1+\frac{TL^2}{\beta d}
\right),
\label{eq:exception_count}
\end{align}
where we used $V_1=\beta I$ and
$\|x_{a_t,t}\|_2\leq L$.

We now control excursions of the dual process above the threshold
\[
a\equiv \frac{C_{\max}}{c_2\epsilon}.
\]
Suppose that $\lambda_t>a$, and let
\[
k=
\max\{s<t:\lambda_s\leq a\}
\]
be the most recent time before $t$ at which the process was at or below
$a$. Since $|g_k|\leq1$ and $\eta=C_\eta/\sqrt T$,
\[
\lambda_{k+1}
\leq
a+\eta.
\]

During the excursion, $\lambda_s>a$, and hence the conditional
drift is non-positive on all non-exceptional rounds. On an exceptional
round, we use the crude bound
\[
\mu_s\leq 1.
\]
Therefore, writing $\mathcal{I}_{ex}$ for the set of exceptional rounds,
\begin{align}
\lambda_t
&\leq
a+\eta
+
\eta\sum_{s=k+1}^{t-1}\mu_s
+
\eta\sum_{s=k+1}^{t-1}X_s\\
&\leq
a+\eta
+
\eta |\mathcal{I}_{ex}|
+
\eta\sum_{s=k+1}^{t-1}X_s\\
&\leq
a+\eta+\eta N_T
+
\eta\sum_{s=k+1}^{t-1}X_s.
\label{eq:lambda_excursion}
\end{align}

Let $S_n=\sum_{s=1}^nX_s$ and $S_0=0$. Since $(X_t)$ is a martingale-difference sequence with $|X_t|\leq1$, the maximal Azuma--Hoeffding inequality gives, with probability at least $1-\delta'$,
\[
\max_{0\leq n\leq T}|S_n|\leq\sqrt{2T\log\left(\frac{2}{\delta'}\right)}.
\]
Every interval sum is a difference of two prefix sums, so on the same event,
\[
\max_{1\leq k<t\leq T+1}\sum_{s=k}^{t-1}X_s\leq2\sqrt{2T\log\left(\frac{2}{\delta'}\right)}.
\]
Combining this inequality with
\eqref{eq:lambda_excursion}, we obtain
\[
\max_{t\leq T+1}\lambda_t
\leq
a+\eta+\eta N_T
+
2\eta\sqrt{2T\log\left(\frac{2}{\delta'}\right)}
\]
with probability at least $1-\delta'-\delta_{\mathcal E}$.

Using $\eta=C_\eta/\sqrt T$ and the bound in
\eqref{eq:exception_count}, it follows that
\begin{align*}
\max_{t\leq T+1}\lambda_t
&\leq
\frac{1}{c_2\epsilon}
+
\frac{C_\eta}{\sqrt T}
+
2C_\eta\sqrt{2\log\left(\frac{2}{\delta^\prime}\right)}
 \cr
& +
\frac{8d C_\eta\alpha_T^2}{(1-c_1-c_2)^2\epsilon^2\sqrt T}
\log\left(
1+\frac{TL^2}{\beta d}
\right)\cr
&\equiv B_{\lambda}(T,\delta)
\end{align*}
with probability at least $1-\delta'-\delta_{\mathcal E}$.

We choose $\delta_{\mathcal E}=T^{-\zeta}$ and
$\delta'=\delta-\delta_{\mathcal E}$, with $\zeta>2$.

\
Choose instead $c_2=(1-c_1)/2$. Then $1-c_1-c_2=(1-c_1)/2$ stays bounded away from zero for fixed $\xi<1$, so the exceptional-round term is $\widetilde{\mathcal O}(1/\sqrt T)$. Therefore,
\begin{equation}
    B_{\lambda} = \frac{2}{(1-c_1)\epsilon} + 2C_\eta\sqrt{2\log\left(\frac{2}{\delta^\prime}\right)} + \widetilde{\mathcal O}(1/\sqrt{T}).
\end{equation}
For sufficiently large $T$, $c_1=\xi+\rho/\epsilon=\xi+\mathcal O(T^{-1/2})$ and $\delta'=\delta-T^{-\zeta}\to\delta$. Now, for a sufficiently large $T > T_0$, $\frac{1}{1-c_1} \leq \frac{2}{1-\xi}$ and $\delta' \geq \delta/2$. Substituting $C_\eta=1/((1-\xi)\epsilon\sqrt{4\log(2/\delta)})$ gives
\begin{equation}
    B_{\lambda} = \frac{6}{(1-\xi)\epsilon} + \widetilde{\mathcal O}(1/\sqrt{T}).
\end{equation}




\end{proof}

\subsection{Proof of the Cost Regret Bound}
\label{sec:regret_proof_detailed}

\begin{proof}[Proof of Theorem~\ref{thm:cost_regret}, regret bound]

We first define $\mathcal F_t \equiv \sigma(p_1,a_1,Y_1,\dots,p_{t-1},a_{t-1},Y_{t-1},p_t)$, so that $a_t$ (hence $x_{a_t,t}$ and $\lambda_t$) is $\mathcal F_t$-measurable, while $Y_t$ and $a_t^*\sim\pi^*(\cdot\mid p_t)$ (drawn with independent external randomness) are not.
Let also $\mathcal H_{t-1}\equiv\sigma(p_1,a_1,Y_1,\dots,p_{t-1},a_{t-1},Y_{t-1})$ denote the history up to the end of round $t-1$. Then $\widehat\theta_t$, $V_t$, $\alpha_t$ and $\lambda_t$ are $\mathcal H_{t-1}$-measurable, while $p_t$ and $a_t^*$ are independent of $\mathcal H_{t-1}$.

Fix the failure level $\delta'=1/T$, used only in this proof, and let $B_{\lambda}=B_{\lambda}(T,1/T)$ be the bound of Lemma~\ref{lem:dual_bound} on $\max_{1\le t\le T+1}\lambda_t$ at this level.

Define
\begin{align}
\tau \;\equiv\; &\min\Big\{\, t\in\{1,\dots,T+1\} \;:\; \mathcal E \text{ fails at round } t,\ \cr &\text{ or }\ \lambda_t > B_{\lambda} \,\Big\},
\label{eq:tau_def}
\end{align}
with $\tau=T+1$ if neither event occurs. Therefore,
\begin{equation}
\mathbb P(\tau\le T) \;\le\; \delta_{\mathcal E} + \delta' \;=\; \delta_{\mathcal E} + \frac1T.
\label{eq:tau_prob}
\end{equation}
By definition of $\tau$ as the \emph{first} crossing time, $\lambda_t\le B_{\lambda}$ holds only for $t<\tau$, not at $t=\tau$ itself; we therefore work with $T'\equiv(\tau-1)\wedge T$ throughout, rather than $\tau\wedge T$.

\textbf{Step 1: Primal comparison.}
By the selection rule of Algorithm~\ref{alg:primal_dual_glm_ucb_final}, for every $t$,
\[
c(a_t)-\lambda_t\,\mathrm{UCB}_t(a_t) \;\le\; c(a_t^*)-\lambda_t\,\mathrm{UCB}_t(a_t^*),
\]
so
\begin{equation}
c(a_t)-c(a_t^*) \;\le\; \lambda_t\Big(\mathrm{UCB}_t(a_t)-\mathrm{UCB}_t(a_t^*)\Big).
\label{eq:step1}
\end{equation}

\textbf{Step 2: Replacing UCB by the true satisfaction probabilities.}
By Lemma~\ref{lem:ucb_bounds}, on $\mathcal E$, for every $a\in\mathcal A$,
\[
p_{a,t}^* \;\le\; \mathrm{UCB}_t(a) \;\le\; p_{a,t}^* + 2\alpha_t\|x_{a,t}\|_{V_t^{-1}}.
\]
For $t\le T'$, $\mathcal E$ holds at round $t$ by definition of $\tau$, so
\begin{equation}
\mathrm{UCB}_t(a_t)-\mathrm{UCB}_t(a_t^*)
\le
\big(p^*_{a_t,t}-p^*_{a_t^*,t}\big) + 2\alpha_t\|x_{a_t,t}\|_{V_t^{-1}}.
\label{eq:step2}
\end{equation}
Combining \eqref{eq:step1}--\eqref{eq:step2}, for $t\le T'$,
\begin{equation}
c(a_t)-c(a_t^*)
\;\le\;
\lambda_t\big(p^*_{a_t,t}-p^*_{a_t^*,t}\big) + 2\lambda_t\alpha_t\|x_{a_t,t}\|_{V_t^{-1}}.
\label{eq:step2final}
\end{equation}

\paragraph{Step 3: Decomposition.}
With $g_t\equiv1-\epsilon+\rho-Y_t$, $\mu_t\equiv\mathbb E[g_t\mid\mathcal F_t]=1-\epsilon+\rho-p^*_{a_t,t}$, $X_t\equiv g_t-\mu_t=p^*_{a_t,t}-Y_t$, and $\nu_t^*\equiv1-\epsilon+\rho-p^*_{a_t^*,t}$, adding and subtracting $1-\epsilon+\rho$ in \eqref{eq:step2final} gives, for $t\le T'$,
\begin{equation}
c(a_t)-c(a_t^*) \;\le\; \underbrace{-\lambda_t\mu_t}_{(A)_t} \;+\; \underbrace{\lambda_t\nu_t^*}_{(B)_t} \;+\; \underbrace{2\lambda_t\alpha_t\|x_{a_t,t}\|_{V_t^{-1}}}_{(C)_t}.
\label{eq:decomposition}
\end{equation}
We bound $\sum_{t=1}^{T'}\mathbb E[(A)_t]$, $\sum_{t=1}^{T'}\mathbb E[(B)_t]$, and $\sum_{t=1}^{T'}(C)_t$ in Steps 4--6.

\paragraph{Step 4: Term $(A)$.}
From $\lambda_{t+1}=\max\{0,\lambda_t+\eta g_t\}$ and $|g_t|\le1$,
\begin{equation}
-\lambda_t g_t \;\le\; \frac{\lambda_t^2-\lambda_{t+1}^2}{2\eta} + \frac{\eta}{2}
\qquad\text{(pathwise)}.
\label{eq:pathwise_ineq}
\end{equation}
Since $(A)_t=-\lambda_t g_t+\lambda_t X_t$, summing \eqref{eq:pathwise_ineq} over $t=1,\dots,T'$ and using $\lambda_1=0$,
\begin{equation}
\sum_{t=1}^{T'}(A)_t \;\le\; \frac{\eta T}{2} + \sum_{t=1}^{T'}\lambda_t X_t.
\label{eq:termA_pathwise}
\end{equation}
Since $\lambda_t$ is $\mathcal F_t$-measurable and $|\lambda_t X_t|\le B_{\lambda}$ for $t\le T'$, $\big(\sum_{t=1}^{n\wedge(\tau-1)}\lambda_t X_t\big)_{n\ge0}$ is a bounded martingale, so by optional stopping,
\begin{equation}
\mathbb E\Big[\sum_{t=1}^{T'}\lambda_t X_t\Big] = 0
\quad\Longrightarrow\quad
\mathbb E\Big[\sum_{t=1}^{T'}(A)_t\Big] \;\le\; \frac{\eta T}{2}.
\label{eq:termA_final}
\end{equation}

\paragraph{Step 5: Term $(B)$.}
By the proof of Lemma~\ref{lem:necessary_expectation}, feasibility of $\pi^*$ implies $\mu^*\equiv\mathbb E[p^*_{a_t^*,t}]\ge \mu_{T,\delta}=1-\epsilon+\rho^*_{\max}$. Because $p_t$ and $a_t^*$ are independent of $\mathcal H_{t-1}$ (prompts are i.i.d.\ and $\pi^*$ is stationary with external randomization), $\mathbb E[p^*_{a_t^*,t}\mid\mathcal H_{t-1}]=\mu^*$. Hence
\begin{align}
\mathbb E\big[1-\epsilon+\rho-p^*_{a_t^*,t}\,\big|\,\mathcal H_{t-1}\big] &=\rho-(\mu^*-(1-\epsilon)) \cr
&\le \rho-\rho^*_{\max}\cr
&\le (\rho-\rho^*_{\max})^+ .
\end{align}
Since $\mathbf 1\{t\le T'\}\lambda_t$ is $\mathcal H_{t-1}$-measurable with $0\le\lambda_t\le B_{\lambda}$ on $\{t\le T'\}$,
\begin{align}
\label{eq:termB_final}
&\mathbb E\Big[\sum_{t=1}^{T'}(B)_t\Big] \cr
&=
\sum_{t=1}^T
\mathbb{E}\!\left[
\mathbf{1}\{t\leq T'\}\lambda_t\,
\mathbb{E}\!\left[
1-\epsilon+\rho-p^*_{a_t^*,t}
\mid\mathcal{H}_{t-1}
\right]
\right] \nonumber\\
&\leq T\,B_{\lambda}\,(\rho-\rho^*_{\max})^+ 
\end{align}
(If $\rho\le\rho^*_{\max}$ this is $0$)

\paragraph{Step 6: Term $(C)$.}
For $t\le T'$, $\lambda_t\le B_{\lambda}$ and $\alpha_t\le\alpha_T$. By Cauchy--Schwarz and the elliptical potential lemma (with $\beta\ge L^2$),
\begin{equation}
\sum_{t=1}^{T'}(C)_t
\;\le\;
2B_{\lambda}\alpha_T\sqrt{2dT\log\Big(1+\frac{TL^2}{\beta d}\Big)}.
\label{eq:termC_final}
\end{equation}

\paragraph{Step 7: Cost of the bad event.}
Rounds $T'+1,\dots,T$ (nonempty only if $\tau\le T$, i.e.\ the stopping round $\tau$ itself plus everything after) each contribute at most $1$, so
\begin{align}
\mathbb E\Big[\sum_{t=T'+1}^{T}\big(c(a_t)-c(a_t^*)\big)\Big]
&\;\le\;
T\cdot\mathbb P(\tau\le T)\cr
&\;\le\;
T\delta_{\mathcal E}+1 \cr
&\;\le\; 2
\label{eq:termD_final}
\end{align}
using $\delta_{\mathcal E}=T^{-\zeta}$, $\zeta\ge1$.

\paragraph{Step 8: Combining.}
Summing \eqref{eq:decomposition} and splitting at $T'$, then applying \eqref{eq:termA_final}, \eqref{eq:termB_final}, \eqref{eq:termC_final}, \eqref{eq:termD_final},
\begin{align}
\mathrm{Regret}_T
&\;\le\;
\frac{\eta T}{2} + 2B_{\lambda}\alpha_T\sqrt{2dT\log\Big(1+\frac{TL^2}{\beta d}\Big)} \nonumber\\
&+ T B_{\lambda}(\rho-\rho^*_{\max})^+ + 2.
\label{eq:combined}
\end{align}
With $\eta=C_\eta/\sqrt T$ and, from the crude (Azuma-based) form of Lemma~\ref{lem:dual_bound} at $\delta'=1/T$,

\[
B_{\lambda} \;=\; \mathcal{O}(1/\big(\epsilon(1-\xi)\big)),
\qquad
\alpha_T \;=\;\mathcal O\Big(\frac{\sqrt{d \log T}}{\kappa}\Big)
\]
(the $\log(1/\delta')=\log T$ dependence is absorbed into $\widetilde{\mathcal O}(\cdot)$).
Choose $\rho=C_\rho/\sqrt T$. Then $(\rho-\rho^*_{\max})^+\le\rho$ and
\[
T B_{\lambda}(\rho-\rho^*_{\max})^+\le C_\rho B_\lambda\sqrt T=\widetilde{\mathcal O}\!\Big(\frac{\sqrt{\log(4/\delta)}\sqrt T}{(1-\xi) \epsilon}\Big).
\]
Therefore \eqref{eq:combined} gives
\begin{align*}
&\mathrm{Regret}_T\cr
&\le
\frac{C_\eta}{2}\sqrt T
+
\widetilde{\mathcal O}\!\left(\frac{d\sqrt T}{\kappa(1-\xi)\epsilon}\right) \cr
&+\widetilde{\mathcal O}\!\left(\frac{\sqrt{\log (4/\delta)}\sqrt T}{(1-\xi)\epsilon}\right)
+
2 \cr
&=\widetilde{\mathcal O}\!\left(
\frac{(d+\sqrt{\log(4/\delta)} + \sqrt{1/\log(4/\delta)})\sqrt{T}}
{\kappa(1-\xi)\epsilon}
\right) \cr
&= \widetilde{\mathcal O}\!\left(
\frac{(d+\sqrt{\log(4/\delta)})\sqrt{T}}
{\kappa(1-\xi)\epsilon}
\right)
\end{align*}
We can show that for $\delta \leq 1$, $C_\eta\sqrt T/2 \leq C_\rho B_\lambda\sqrt{T}$, therefore, we can simplify the expression to get the last term.
\end{proof}
\subsection{Proof of High-Probability Quality Constraint Satisfaction}
\label{sec:quality_bound}
\begin{proof}
\label{sec:step3_sla_proof}

Consider the dual update

$$
\lambda_{t+1}
=
\max\left\{
0,\,
\lambda_t
+
\eta
\left(
1-\epsilon+\rho-Y_t
\right)
\right\}.
$$

Since $\max\{0,x\}\ge x$,
$$
\lambda_{t+1}
\geq
\lambda_t
+
\eta(1-\epsilon)
+
\eta\rho
-
\eta Y_t.
$$

Summing this inequality from $t=1$ to $T$ and using $\lambda_1=0$ gives
\begin{align}
\lambda_{T+1}
&\geq
\eta
\sum_{t=1}^{T}
\left(
1-\epsilon+\rho-Y_t
\right)\cr
&=
\eta(1-\epsilon+\rho)T
-
\eta\sum_{t=1}^{T}Y_t.
\end{align}

Rearranging,
\begin{equation}
\sum_{t=1}^{T}Y_t
\geq
(1-\epsilon+\rho)T
-
\frac{\lambda_{T+1}}{\eta}.
\label{eq:satisfaction_telescoped}
\end{equation}

On the event where the dual variable is uniformly bounded,

$$
\lambda_{T+1}\leq B_\lambda.
$$

Therefore,
\begin{equation}
\sum_{t=1}^{T}Y_t
\geq
(1-\epsilon)T
+
\rho T
-
\frac{B_\lambda}{\eta}.
\label{eq:satisfaction_telescoped_2}
\end{equation}

Finally, we have

$$
\eta=\frac{C_\eta}{\sqrt T},
\qquad
\rho = \frac{C_\rho}{\sqrt T}.
$$

Choose
\[
C_\rho>\frac{B_\lambda}{C_\eta}.
\]
Then
\[
\rho T-\frac{B_\lambda}{\eta}
=
\left(C_\rho-\frac{B_\lambda}{C_\eta}\right)
\sqrt T
>0,
\]
which is sufficient for the satisfaction guarantee.

Using the queue bound above with $\delta'=\delta-\delta_{\mathcal E}$ and $\delta_{\mathcal E}=T^{-\zeta}$, its leading terms satisfy
\[
\frac{B_\lambda}{C_\eta}
\leq
\frac{1}{(1-\xi)\epsilon C_\eta}
+\sqrt{2\log\!\left(\frac{4}{\delta}\right)}
+o(1).
\]
For $C_\eta=((1-\xi)\epsilon\sqrt{2\log(4/\delta)})^{-1}$, the right-hand side converges to
$2\sqrt{2\log(4/\delta)}$ as $T\to\infty$. Since
$C_\rho=5\sqrt{2\log(4/\delta)}$, the strict inequality
$C_\rho>B_\lambda/C_\eta$ therefore holds for all sufficiently large $T$ under the stated queue bound. Hence there exists $T_0$ such that, for $T\geq T_0$, the chance constraint holds with probability at least $1-\delta$.

Consequently, 

$$
\sum_{t=1}^{T}Y_t
\geq
(1-\epsilon)T.
$$

Since this conclusion holds on an event of probability at least $1-\delta$, we obtain

$$
\mathbb P
\left(
\frac{1}{T}
\sum_{t=1}^{T}Y_t
\geq
1-\epsilon
\right)
\geq
1-\delta.
$$

This establishes that our algorithm satisfies the quality constraint.
\end{proof}

\begin{figure}
    \centering
    \includegraphics[
        width=0.48\textwidth,
    ]{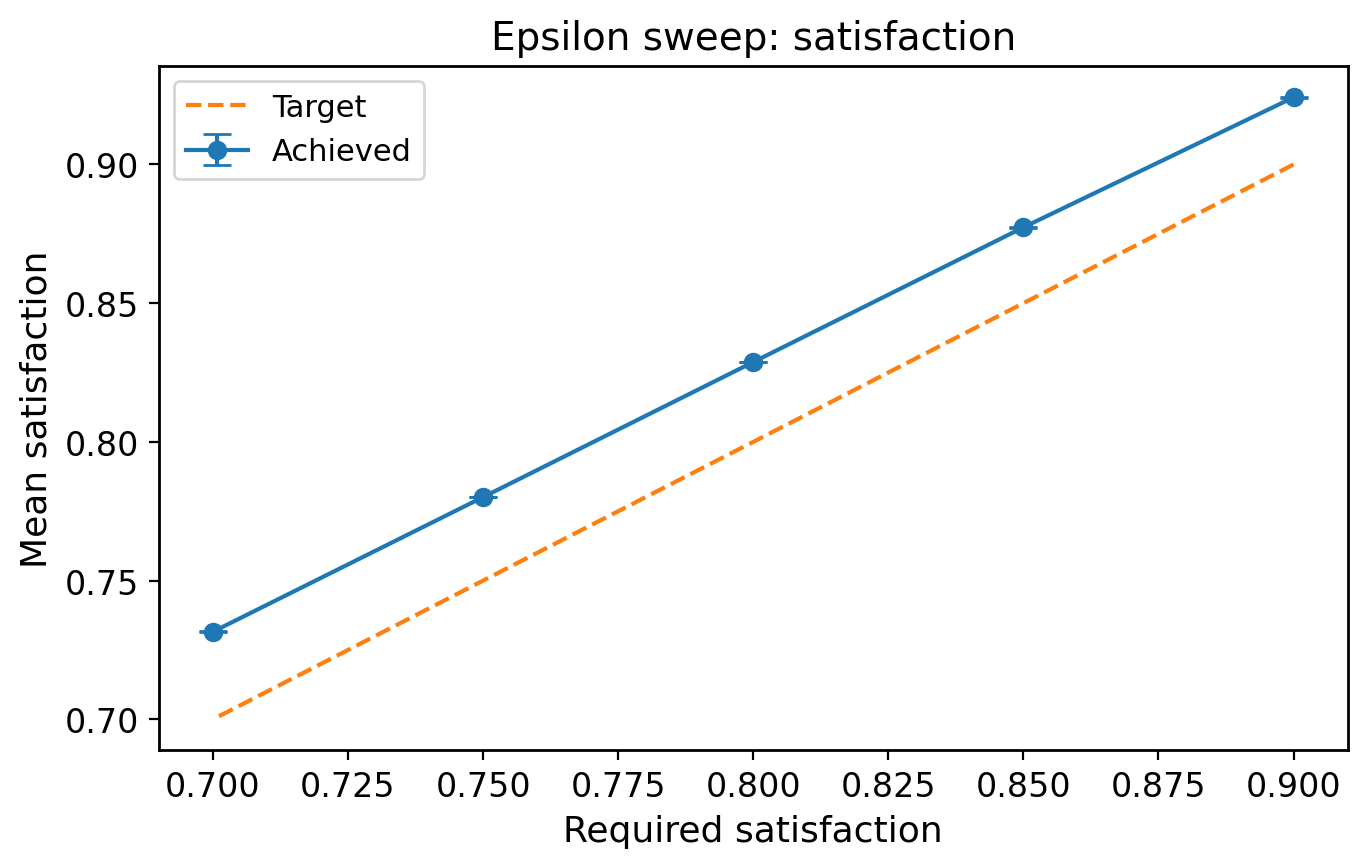}
    \caption{Sensitivity to the satisfaction tolerance $\epsilon$. Dashed lines indicate the corresponding satisfaction
    and constraint thresholds.}
    \label{fig:epsilon_sweep}
\end{figure}

\begin{figure}
    \centering
    \includegraphics[width=0.95\linewidth]{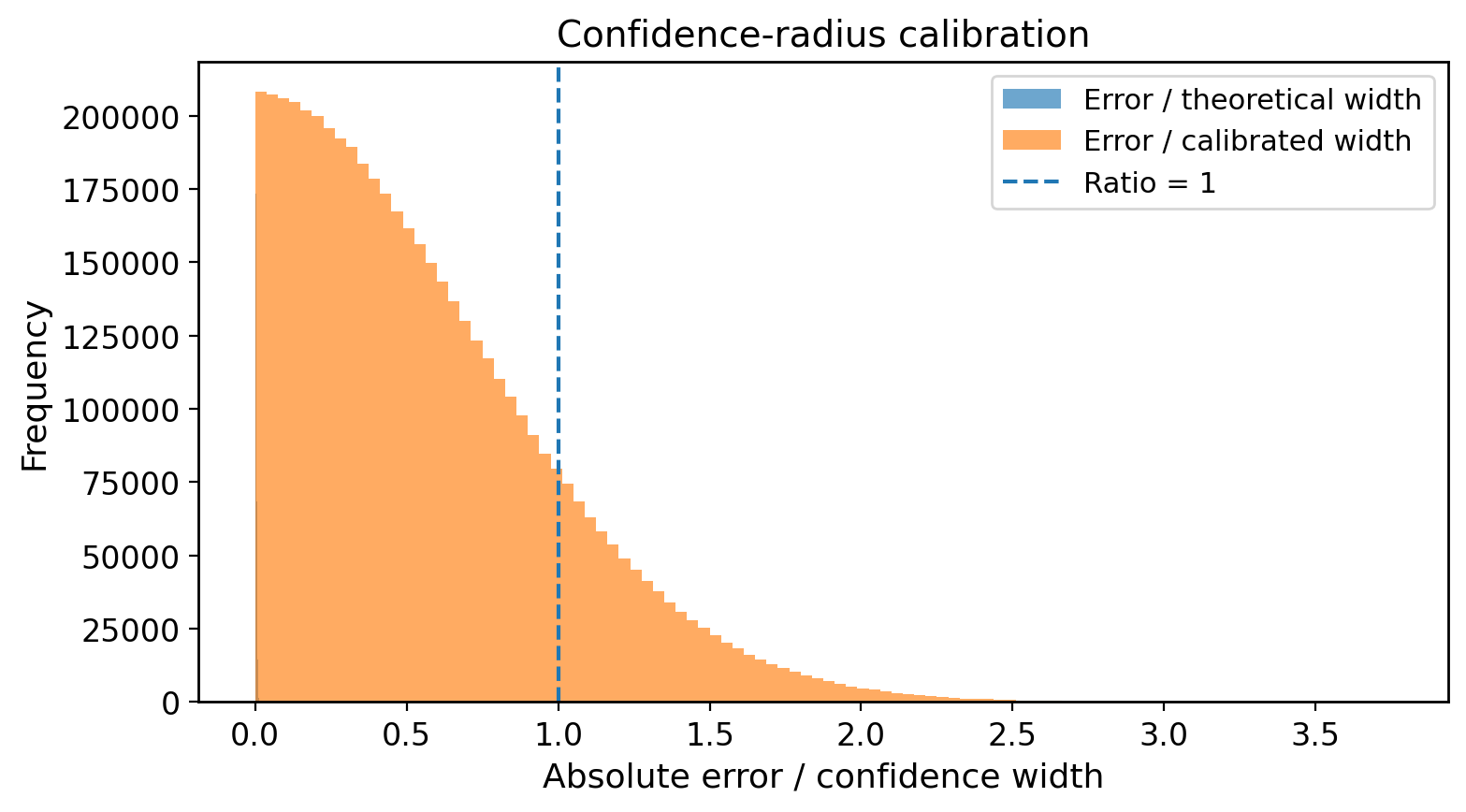}
    \caption{Confidence Calibration}
    \label{fig:confidence_calibration}
\end{figure}

\begin{figure}
    \centering
    \includegraphics[width=0.9\linewidth]{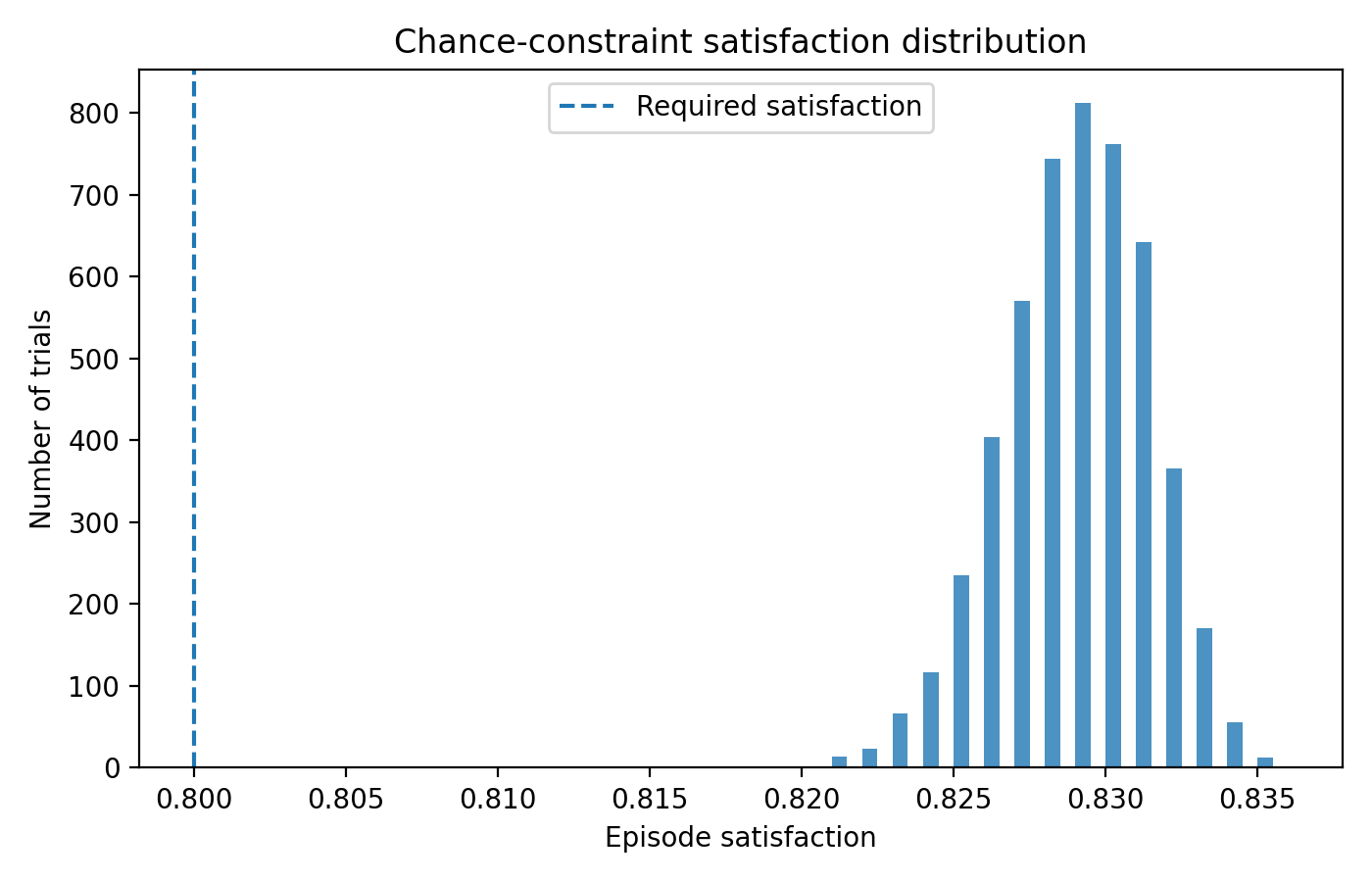}
    \caption{Empirical probability of satisfying the chance constraint over 5000 independent trials.}
    \label{fig:chance_constraint}
\end{figure}
    
\section{Additional Experiments}
\label{section:expts_add}
\subsection{Experimental Details}

All experiments are implemented in Python. For the synthetic experiments, we consider feature dimensions $d=8$ and $K=5$ routing actions. Unless otherwise stated, we use $T=1000$ rounds, $\epsilon=0.2$, $\delta=0.05$, and initialize the model parameters and covariance matrix as specified in the algorithm. Results are averaged over multiple independent runs, with confidence intervals computed across runs.

For our experiments, for $T=100$, we fix $\eta = 0.5$, $\rho = 0.15$ for $T=100$ and further scale them with respect to $1/\sqrt{T}$ for other values of $T$.
Similarly, in the experiments we multiply the theoretical confidence radius $\alpha_t$ by $0.005$ as an empirical calibration. This calibrated radius is used only for the experiments. We report their constraint-satisfaction performance empirically.

All experiments were conducted on a CPU-based machine equipped with an Intel Core i5-14600KF processor and 32\, GB of RAM. The total runtime for all experiments was approximately six hours.

For the real-LLM experiments, we use 100 queries from WildBench v2 and five LLMs as the available routing actions. The LLMs are mentioned as follows:
\begin{enumerate}
    \item nemotron-3-super-120b-a12b \cite{nvidia_nemotron_3_2025}
    \item deepseek-v4-flash \cite{deepseekai2026deepseekv4}
    \item qwen3.6-35b-a3b \cite{qwen36_35b_a3b}
    \item gemma-4-31b-it \cite{gemmateam2026gemma4}
    \item qwen3.8-27b \cite{qwen38}
\end{enumerate}
Each response is evaluated by an independent lightweight judge model (muse-glimmer-30b \cite{metamodels2026museglimmer30b}) using six quality dimensions: correctness, relevance, completeness, clarity, conciseness, and safety. The resulting quality scores are used to compute the satisfaction outcome and routing cost. All methods use the same queries, models, costs, and evaluation procedure to ensure a fair comparison. The code for all the experiments can be found at \blue{https://github.com/ameyanjarlekar/LLM-Routing}, and the models are hosted at \blue{https://lumen.ncsa.illinois.edu/}.

\subsection{Sensitivity to the Satisfaction Tolerance}
\label{sec:exp_epsilon}

We study how the routing performance changes with the satisfaction tolerance $\epsilon$. A smaller $\epsilon$ imposes a stricter satisfaction requirement, while a larger $\epsilon$ allows greater flexibility in model selection. We vary $\epsilon$ over a range of values and evaluate the resulting routing cost, satisfaction, and chance-constraint probability.

\subsection{Confidence Width Calibration}
\label{sec:exp_calibration}

We evaluate the tightness of the theoretical confidence width for the predicted satisfaction probabilities. At $T=1000$, the raw theoretical width is highly conservative in our evaluated data. We also evaluate a calibrated width obtained by applying a fixed scaling factor of $0.005$. The calibrated width covers $83.4\%$ of the observed errors in this experiment, which justifies our choice of multiplying the theoretical confidence radius $\alpha_t$ by $0.005.$

\subsection{Quality-Constraint Satisfaction}

We next evaluate the reliability of constraint satisfaction across independent trials. For each independent trial, we run the algorithm for $T=1000$ rounds and record whether the realized average satisfaction meets the target $1-\epsilon=0.8$. We repeat this procedure over 5000 independent trials and report the empirical constraint-satisfaction frequency with $95\%$ confidence intervals. Figure~\ref{fig:chance_constraint} compares this empirical frequency with the target $1-\delta=0.95$
\end{document}